\newtheorem{theorem}{Theorem}
\newtheorem{lemma}[theorem]{Lemma}
\newtheorem{corollary}[theorem]{Corollary}
\newenvironment{proof}[1][Proof]{\begin{trivlist}
\item[\hskip \labelsep {\bfseries #1}]}{\end{trivlist}}
\begin{document}
%
\title{Fixed-Rank Representation for Unsupervised Visual Learning}
%
%
%
%

\author{Risheng~Liu,
        Zhouchen~Lin,
        Fernando~De~la~Torre,
        and~Zhixun~Su,
\IEEEcompsocitemizethanks{\IEEEcompsocthanksitem R. Liu is with School of Mathematical Sciences, Dalian University of Technology, Dalian, P.R. China.
This work is done when R. Liu is visiting the Robotics Institute of Carnegie Mellon University.
E-mail: rsliu0705@gmail.com.
\IEEEcompsocthanksitem Z. Lin is with Microsoft Research Asia and Key Lab. of Machine Perception (MOE), Peking University, P.R. China.
\IEEEcompsocthanksitem F. De~la~Torre is with Robotics Institute, Carnegie Mellon University, USA.
\IEEEcompsocthanksitem Z. Su is with School of Mathematical Sciences, Dalian University of Technology, Dalian, P.R. China.}
\thanks{}}

%
%

\markboth{Technical Report}%
{Liu \MakeLowercase{\textit{et al.}}: Fixed-Rank Representation for Unsupervised Visual Learning}
%


\IEEEcompsoctitleabstractindextext{%
\begin{abstract}
Subspace clustering and feature extraction are two of the most commonly used unsupervised learning
techniques in computer vision and pattern recognition. State-of-the-art techniques for subspace
clustering make use of recent advances in sparsity and rank minimization. However, existing techniques
are computationally expensive and may result in degenerate solutions that degrade clustering performance
in the case of insufficient data sampling. To partially solve these problems, and inspired by existing work on matrix factorization, this paper proposes fixed-rank representation (FRR) as a unified framework for unsupervised visual learning. FRR is able to reveal the structure of multiple subspaces in closed-form when the data is noiseless. Furthermore, we prove that under some suitable conditions, even with insufficient observations, FRR can still reveal the true subspace memberships.  To achieve robustness to outliers and noise, a sparse regularizer is introduced into the FRR framework. Beyond subspace clustering, FRR can be used for unsupervised feature extraction. As a non-trivial byproduct, a fast numerical solver is developed for FRR. Experimental results on both synthetic data and real applications validate our theoretical analysis and demonstrate the benefits of FRR for unsupervised visual learning.
\end{abstract}

\begin{keywords}
Low-Rank Representation, Matrix Factorization, Motion Segmentation, Feature Extraction.
\end{keywords}}

\maketitle

\IEEEdisplaynotcompsoctitleabstractindextext

%
\IEEEpeerreviewmaketitle
\section{Introduction}
Clustering and embedding are two of the most important techniques for visual data analysis.
In the last decade, inspired by the success of compressive sensing, there has been a
growing interest in incorporating sparsity to visual learning, such as image/video processing~\cite{Candes-2009-RPCA},
object classification~\cite{Wright-2009-face,Cabral-2011} and motion segmentation~\cite{Rao-2010-motion}.
Early studies~\cite{Elhamifar-2009-SSC,Wright-2009-face} usually consider the 1D sparsity (i.e., the nonzero entries of a vector, also known as the $l_0$ norm) in their models. Recently,
there has been a surge of methods~\cite{Candes-2009-RPCA,Liu-2010-LRR,Favaro-2011-CloseForm} which also consider the rank of a matrix as a 2D sparsity measure. However, it is difficult to directly solve these models due to the discrete nature of the $l_0$ norm and the rank function.
 A common strategy to alleviate this problem has been to use the $l_1$ norm and the nuclear norm~\cite{Recht-2007-GuaranteeNNM} as the convex surrogates of the $l_0$ norm and the rank function, respectively.

An important problem in unsupervised learning of visual data is subspace clustering. Recent advances in subspace clustering
make use of sparsity-based techniques. For example, sparse subspace clustering (SSC)~\cite{Elhamifar-2009-SSC,Elhamifar-2010-SSC,Soltanolkotabi-2011-SSC} uses the 1D sparsest representation vectors produced by $l_1$ norm minimization to define the affinity matrix of an undirected graph. Then subspace clustering is performed by spectral clustering techniques, such as normalized cut (NCut)~\cite{Shi-2000-ncut}. However, as SSC computes the sparsest representation of each points individually, there is no global structural constraint on the affinity matrix. This characteristic can degrade the clustering performance when data is grossly corrupted. Moreover, according to the theoretical work of~\cite{Nasihatkon-2011-SSC}, the within subspace connectivity assumption for SSC holds only for 2- and 3-dimensional subspaces. So SSC may probably over-segment subspaces when the dimensions are higher than 3.

Low-rank representation (LRR)~\cite{Liu-2010-LRR,Favaro-2011-CloseForm,Ni-2010-LRRPSD} is another recently proposed sparsity-based subspace clustering model. The intuition behind LRR is to learn a low-rank representation of the data.
The work by~\cite{Liu-2011-LRR} shows that LRR is intrinsically equivalent to the shape interaction matrix (SIM)~\cite{Jo-1998-Factor} in absence of noise. In this case, LRR can reveal the true clustering when the subspaces are independent and the data sampling is sufficient\footnote{The subspaces are independent if and only if the dimension of their direct sum is equal to the sum of their dimensions~\cite{Liu-2011-LRR}. For each subspace, the data sampling is sufficient if and only if the rank of the data matrix is equal to the dimension of the subspace~\cite{Liu-2011-Latent}.}. However, LRR suffers from some limitations as well. First, the nuclear norm minimization in LRR typically requires to calculate the singular value decomposition (SVD) at each iteration, which  becomes computationally impractical as the
 scale of the problem grows. By combining a linearized version of alternating direction method (ADM)~\cite{Lin-2009-IALM} with an acceleration technique for SVD computation, the work in~\cite{Lin-2011-LADM} proposed a fast solver, which significantly improves the speed for solving LRR. However, the SVD computation still cannot be completely avoided. Second, and more importantly, if the observations are insufficient, LRR (also SSC) may result in a degenerate solution that significantly degrades the clustering performance.
The work in~\cite{Liu-2011-Latent} introduces ``hidden effects'' to overcome this drawback. However, it is  unclear whether
such ``hidden effects'' can recover the multiple subspace structure for clustering. Moreover, introducing latent variables makes the problem more complex and hard to optimize.


The insufficient data sampling problem in SSC and LRR is similar in spirit to the small sample size problem, that is common in some subspace learning methods, such as linear discriminant analysis~\cite{fisher} and canonical correlation analysis~\cite{hotelling_cca}. In these methods, if the number of samples is smaller than the dimension of the features, the covariance matrices are rank deficient. Three are the common approaches
to solve this problem~\cite{Torre2012}: dimensionality reduction, regularization and factorization (i.e., explicitly parameterize the projection matrix as the product of low-rank matrices). In this paper, we incorporate the factorization idea into representation learning and propose
fixed-rank representation (FRR) to partially solve the problems in existing unsupervised visual learning models. FRR has three main benefits:
\begin{itemize}
\item
Unlike SSC and LRR, which use the sparsest and lowest rank representations, FRR \emph{explicitly} parameterizes the representation matrix as the product of two low-rank matrices. When there is no noise and the data sampling is sufficient, we prove that the FRR solution is also the optimal solution to LRR. In this case, FRR can reveal the multiple subspace structure. Furthermore, we prove that under some suitable conditions, even when the data sampling is insufficient, the memberships of samples to each subspace still can be
identified by FRR. A sparse regularizer is introduced to FRR to model both small noises and gross outliers, which provides robustness
to FRR in real applications.

\item
The most expensive computational component in LRR is to perform SVD at each iteration. Even with some acceleration techniques, the scalability of the nuclear norm minimization is still limited by the computational complexity of SVD. In contrast, FRR avoids SVD computation and can be efficiently applied to large-scale problems.

\item
FRR can also be extended for unsupervised feature extraction. By considering a transposed version of FRR (TFRR), we
show that FRR is related to existing feature extraction methods, such as principal component analysis (PCA)~\cite{pearson,hotelling}. Indeed, our analysis provides a unified framework to understand single subspace feature extraction and multiple subspace clustering by analyzing the column and row spaces of the data.
\end{itemize}

\section{A Review of Previous Work}
Given a data set\footnote{Bold capital letters (e.g., $\mathbf{M}$) denote matrices. The range and the null spaces of $\mathbf{M}$ are defined as
$\mathcal{R}(\mathbf{M}):=\{\mathbf{a} | \exists \mathbf{b}, \mathbf{a}=\mathbf{M}\mathbf{b}\}$ and $\mathcal{N}(\mathbf{M}):=\{\mathbf{a} | \mathbf{M}\mathbf{a} = \mathbf{0}\}$, respectively. $[\mathbf{M}]_{ij}$ and $[\mathbf{M}]_{i}$ denote the $(i,j)$-th entry and
the $i$-th column of $\mathbf{M}$, respectively.
$\mathbf{M}^{\dag}$ denotes the Moore-Penrose pseudoinverse of $\mathbf{M}$.
The block-diagonal matrix formed by a collection of matrices $\mathbf{M}_1,\mathbf{M}_2,...,\mathbf{M}_k$
is denoted by $\mbox{diag}(\mathbf{M}_1,\mathbf{M}_2,...,\mathbf{M}_k)$. $\mathbf{1}_n$ is the all-one column vector of length $n$. $\mathbf{I}_n$ is the $n\times n$ identity matrix.
$\langle\cdot,\cdot\rangle$ denotes the inner product of two matrices. A variety of norms on matrix and vector will be used.
$\|\cdot\|_F$ is the Frobenius norm, $\|\cdot\|_*$ is the nuclear norm \cite{Recht-2007-GuaranteeNNM}, $\|\cdot\|_{2,1}$ is the $l_{2,1}$ norm \cite{Liu-2009-L21}, $\|\cdot\|$ is the spectral norm, $\|\cdot\|_1$, $\|\cdot\|_2$ and $\|\cdot\|_{\infty}$ are the $l_1$,
$l_2$ and $l_{\infty}$ norms, respectively.} $\mathbf{X}=[\mathbf{X}_1,\mathbf{X}_2,\cdots,\mathbf{X}_k] \in \mathbb{R}^{d\times n}$ drawn from a union of $k$ subspaces $\{\mathcal{C}_i\}_{i=1}^k$, where $\mathbf{X}_i$ is a collection of $n_i$ data points sampled from the subspace $\mathcal{C}_i$ with an unknown dimension $d_{C_i}$, the goal of subspace clustering is to cluster data points into their respective subspaces. This section provides a review of SSC and LRR for solving this problem.
To clearly understand the mechanism of these methods, we first consider the case when the data is noise-free. From now on, we always write $\mathbf{X}=\mathbf{U}_{X}\Sigma_{X}\mathbf{V}_{X}^T$ and $r_X$ as the compact SVD and the rank of $\mathbf{X}$, respectively.

\subsection{Sparse Subspace Clustering (SSC)}
SSC~\cite{Elhamifar-2009-SSC,Elhamifar-2010-SSC,Soltanolkotabi-2011-SSC} is based on the idea that each data point in the subspace $\mathcal{C}_i$ should be represented as a linear combination of other points that are also in $\mathcal{C}_i$. Using this intuition, SSC finds the sparsest representation coefficients $\mathbf{Z} = [[\mathbf{Z}]_1,[\mathbf{Z}]_2,\cdots,[\mathbf{Z}]_n]$ by considering the sequence of optimization problems
\begin{equation}
\min\limits_{[\mathbf{Z}]_i}\|[\mathbf{Z}]_i\|_1, \ s.t. \ [\mathbf{X}]_i = \mathbf{X}[\mathbf{Z}]_i, \ [\mathbf{Z}]_{ii} = 0, \label{eq:sr}
\end{equation}
where $i=1,2,\cdots,n.$
Then one can use $\mathbf{Z}$ to define the affinity matrix of an undirected graph as
$(|\mathbf{Z}| + |\mathbf{Z}^T|)$ and perform NCut on this graph, where $|\mathbf{Z}|$ denotes a matrix whose entries are the absolute values of $\mathbf{Z}$.
The SSC model can also be rewritten in matrix form as
\begin{equation}
\min\limits_{\mathbf{Z}}\|\mathbf{Z}\|_1, \ s.t. \ \mathbf{X} = \mathbf{X}\mathbf{Z}, \ [\mathbf{Z}]_{ii} = 0.\label{eq:gsr}
\end{equation}
Note that both $l_1$ norm minimization models (\ref{eq:sr}) and (\ref{eq:gsr}) can only be solved numerically.

\subsection{Low-Rank Representation (LRR)}\label{sec:lrr}

By extending the sparsity measure from 1D to 2D for the representation, LRR \cite{Liu-2010-LRR,Favaro-2011-CloseForm,Ni-2010-LRRPSD}
proposes a low-rank based criterion for subspace clustering.
By utilizing the nuclear norm as a surrogate for the rank function, LRR solves the following nuclear norm minimization problem
\begin{equation}
\min\limits_{\mathbf{Z}} \|\mathbf{Z}\|_*, \ s.t. \ \mathbf{X} = \mathbf{X}\mathbf{Z}.\label{eq:lrr}
\end{equation}
Unlike SSC, which can only be solved numerically, $\mathbf{V}_{X}\mathbf{V}_{X}^T$ (also known as SIM \cite{Jo-1998-Factor}), which has a block-diagonal structure, is the closed-form solution to (\ref{eq:lrr}) \cite{Liu-2011-LRR}. Although \cite{Liu-2011-LRR} has proved this, in the following section, we will provide a simpler
derivation, that provides new insights into LRR.

\section{Fixed-Rank Representation}\label{sec:frr}
In this section, we propose a new model, named fixed-rank representation (FRR), for subspace clustering. We start with the following analysis on LRR.

\subsection{Motivation}

To better understand the mechanism of LRR and illustrate our motivation, we show that $\mathbf{V}_{X}\mathbf{V}_{X}^T\in\mathcal{R}(\mathbf{X}^T)$ is the optimal solution to LRR in a simple way\footnote{Note that here we only analyze the optimality of $\mathbf{V}_{X}\mathbf{V}_{X}^T$ to (\ref{eq:lrr}), not its uniqueness.}.
By the identity $\mathbf{X}=\mathbf{X}\mathbf{X}^{\dag}\mathbf{X}$ and the constraint in (\ref{eq:lrr}), we have
$\mathbf{X}=\mathbf{X}\mathbf{Z}=\mathbf{X}\mathbf{X}^{\dag}\mathbf{X}\mathbf{Z}=\mathbf{X}\mathbf{X}^{\dag}\mathbf{X}$. Thus $\mathbf{X}^{\dag}\mathbf{X}=\mathbf{V}_X\mathbf{V}_X^T$ is a feasible solution to (\ref{eq:lrr}). So the general form of the solution is $\mathbf{Z}=\mathbf{V}_X\mathbf{V}_X^T + \mathbf{Z}_n$, where $\mathbf{Z}_n \in \mathcal{N}(\mathbf{X})$. As
$\mathcal{R}(\mathbf{X}^T)\perp\mathcal{N}(\mathbf{X})$, we have $\mathbf{V}_X^T\mathbf{Z}_n=\mathbf{0}$. This together with the duality definition of nuclear norm \cite{Recht-2007-GuaranteeNNM} leads the following inequality
$$
\|\mathbf{Z}\|_*=\max\limits_{\|\mathbf{Y}\|\leq 1}\langle\mathbf{Z},\mathbf{Y}\rangle\geq \langle\mathbf{Z},\mathbf{V}_X\mathbf{V}_X^T\rangle
= r_X =\|\mathbf{V}_X\mathbf{V}_X^T\|_*.
$$
This concludes that $\mathbf{V}_X\mathbf{V}_X^T$ is the minimizer to (\ref{eq:lrr}).

The first observation from the prevous analysis is that LRR can successfully remove the effects from $\mathcal{N}(\mathbf{X})$ to obtain a block-diagonal matrix when the data sampling is sufficient. However, it is also observed that the ``lowest rank'' representation in LRR is actually the largest rank matrix within the row space of $\mathbf{X}$, namely the rank of this representation is always equal to the dimension of the row space. Therefore, the lack of observations for each subspace may significantly degrade the clustering performance. For example, due to insufficient data sampling, the dimension of the row space may be equal to the number of samples (i.e., $r_X = n \leq d$). In this case, the optimal solution to~(\ref{eq:lrr}) may reduce to an identity matrix and thus LRR may fail. See Fig.~\ref{fig:structure} as an example.

An obvious question is whether we can find a lower rank representation in the row space of the data set to exactly reveal the subspace memberships for clustering, even when the data sampling is insufficient. In the following subsection, we give a positive answer to this question.

\subsection{The Basic Model}

The key idea of FRR is to minimize the Frobenius norm of the representation $\mathbf{Z}$ instead of the nuclear norm as in LRR. FRR simultaneously computes a fixed lower rank representation $\tilde{\mathbf{Z}}$ (hereafter we write $\mbox{rank}(\tilde{\mathbf{Z}})=m$).
That is, we jointly optimize $\mathbf{Z}$ and $\tilde{\mathbf{Z}}$ as
\begin{equation}
\min\limits_{\mathbf{Z}, \tilde{\mathbf{Z}}}\|\mathbf{Z}-\tilde{\mathbf{Z}}\|_F^2, \ s.t. \ \mathbf{X}=\mathbf{X}\mathbf{Z}, \ \mbox{rank}(\tilde{\mathbf{Z}})=m.
\end{equation}
Obviously, $\tilde{\mathbf{Z}}$ can be expressed, non-uniquely, as a matrix product $\tilde{\mathbf{Z}}=\mathbf{L}\mathbf{R}$,
where $\mathbf{L} \in \mathbb{R}^{n\times m}$ and $\mathbf{R} \in \mathbb{R}^{m\times n}$. Replacing $\tilde{\mathbf{Z}}$ by $\mathbf{L}\mathbf{R}$,
we arrive at our basic FRR model
\begin{equation}
\min\limits_{\mathbf{Z},\mathbf{L,\mathbf{R}}}\|\mathbf{Z}-\mathbf{L}\mathbf{R}\|_F^2, \ s.t. \ \mathbf{X} = \mathbf{X}\mathbf{Z}.\label{eq:frr}
\end{equation}
In the following sections, we will analyze the problem (\ref{eq:frr}), show properties of the solution to (\ref{eq:frr}), and extend it for real applications.

\subsection{Analysis on the Basic Model}

At first sight, the factorization of $\tilde{\mathbf{Z}}$ leads to a non-convex optimization problem which may prevent one from getting a global solution. The difficulty results from the fact that the minimizer is non-unique. Fortunately, in the following theorem, we prove that one can always obtain a globally optimal solution to (\ref{eq:frr})
in closed-form.
\begin{theorem}\label{thm:frr}
Let $[\mathbf{V}_{X}]_{1:m}=[[\mathbf{V}_{X}]_{1},[\mathbf{V}_{X}]_{2},\cdots,[\mathbf{V}_{X}]_{m}]$.
Then for any fixed $m \leq r_X$,
$(\mathbf{Z}^*,\mathbf{L}^*,\mathbf{R}^*):=(\mathbf{V}_{X}\mathbf{V}_{X}^T, [\mathbf{V}_{X}]_{1:m}, [\mathbf{V}_{X}]_{1:m}^T)$ is a globally optimal solution
to~(\ref{eq:frr}) and the minimum objective function value is $(r_X-m)$.
\end{theorem}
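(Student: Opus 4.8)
The plan is to exploit the characterization of the feasible set already obtained in the motivation subsection and then reduce the joint minimization to a classical low-rank approximation problem. First I would recall that the constraint $\mathbf{X}=\mathbf{X}\mathbf{Z}$ forces every feasible $\mathbf{Z}$ to have the form $\mathbf{Z}=\mathbf{V}_X\mathbf{V}_X^T+\mathbf{Z}_n$ with $\mathbf{X}\mathbf{Z}_n=\mathbf{0}$, and that the orthogonality $\mathcal{R}(\mathbf{X}^T)\perp\mathcal{N}(\mathbf{X})$ gives $\mathbf{V}_X^T\mathbf{Z}_n=\mathbf{0}$. The second observation is that the product $\mathbf{L}\mathbf{R}$ with $\mathbf{L}\in\mathbb{R}^{n\times m}$ and $\mathbf{R}\in\mathbb{R}^{m\times n}$ ranges over exactly the set of matrices of rank at most $m$. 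Hence, for a fixed $\mathbf{Z}$, the inner minimization over $(\mathbf{L},\mathbf{R})$ is precisely the best rank-$m$ Frobenius approximation of $\mathbf{Z}$, whose optimal value by the Eckart--Young theorem equals $\sum_{i>m}\sigma_i^2(\mathbf{Z})$. This collapses the non-convex problem (\ref{eq:frr}) into the single scalar problem $\min_{\mathbf{Z}\text{ feasible}}\sum_{i>m}\sigma_i^2(\mathbf{Z})$.

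The core of the argument is then a uniform lower bound on these tail singular values. Using the two orthogonality relations above, I would compute $\mathbf{Z}^T\mathbf{Z}=\mathbf{V}_X\mathbf{V}_X^T+\mathbf{Z}_n^T\mathbf{Z}_n$, where the cross terms vanish because $\mathbf{V}_X^T\mathbf{Z}_n=\mathbf{0}$ and $\mathbf{V}_X\mathbf{V}_X^T$ is idempotent. Since $\mathbf{Z}_n^T\mathbf{Z}_n\succeq\mathbf{0}$, this yields the semidefinite ordering $\mathbf{Z}^T\mathbf{Z}\succeq\mathbf{V}_X\mathbf{V}_X^T$, and Weyl's monotonicity theorem gives $\sigma_i^2(\mathbf{Z})=\lambda_i(\mathbf{Z}^T\mathbf{Z})\geq\lambda_i(\mathbf{V}_X\mathbf{V}_X^T)$ for every $i$. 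Because $\mathbf{V}_X\mathbf{V}_X^T$ is an orthogonal projector of rank $r_X$, its ordered eigenvalues are $1$ (with multiplicity $r_X$) followed by $0$, so $\sigma_i^2(\mathbf{Z})\geq 1$ for all $m< i\leq r_X$. Summing over these $r_X-m$ indices gives $\sum_{i>m}\sigma_i^2(\mathbf{Z})\geq r_X-m$ for every feasible $\mathbf{Z}$.

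Finally I would verify that the proposed point attains this bound. Taking $\mathbf{Z}^*=\mathbf{V}_X\mathbf{V}_X^T$ (i.e. $\mathbf{Z}_n=\mathbf{0}$) makes the bound tight, and since all the top $r_X$ singular values of $\mathbf{V}_X\mathbf{V}_X^T$ equal $1$, a valid best rank-$m$ approximation is $[\mathbf{V}_X]_{1:m}[\mathbf{V}_X]_{1:m}^T=\mathbf{L}^*\mathbf{R}^*$; a direct expansion using the orthonormality of the columns of $\mathbf{V}_X$ shows $\|\mathbf{V}_X\mathbf{V}_X^T-[\mathbf{V}_X]_{1:m}[\mathbf{V}_X]_{1:m}^T\|_F^2=\sum_{i=m+1}^{r_X}\|[\mathbf{V}_X]_i[\mathbf{V}_X]_i^T\|_F^2=r_X-m$, matching the lower bound. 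This certifies that the exhibited triple is globally optimal with the stated objective value $r_X-m$.

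The step I expect to require the most care is the reduction itself together with the uniform lower bound: it is tempting to treat (\ref{eq:frr}) as a hopeless non-convex problem, and the key insight is that marginalizing out $(\mathbf{L},\mathbf{R})$ via Eckart--Young turns it into a spectral question about the feasible $\mathbf{Z}$, after which the positive-semidefinite ordering $\mathbf{Z}^T\mathbf{Z}\succeq\mathbf{V}_X\mathbf{V}_X^T$ does all the work. One should also keep in mind that the minimizer is not unique (any orthonormal completion within the top eigenspace, and indeed other best rank-$m$ truncations when singular values are tied, would serve equally well), so the theorem only asserts that the displayed triple is \emph{a} global optimum rather than the unique one.
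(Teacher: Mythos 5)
Your proof is correct, and its skeleton matches the paper's: marginalize out $(\mathbf{L},\mathbf{R})$ via Eckart--Young to reduce (\ref{eq:frr}) to minimizing $\sum_{i>m}\sigma_i^2(\mathbf{Z})$ over the feasible set, show the top $r_X$ singular values of any feasible $\mathbf{Z}$ are at least $1$, and verify attainment at the exhibited triple. Where you diverge is in the middle step. The paper works directly from the identity $\mathbf{V}_X^T=\mathbf{V}_X^T\mathbf{Z}$ and invokes the Courant--Fischer minimax theorem (Lemma~\ref{lem:cfmt}) with the test subspace $\mathcal{S}=\mathcal{R}(\mathbf{V}_X)$ to get $\sigma_{r_X}(\mathbf{Z})\geq 1$, from which $\sigma_i(\mathbf{Z})\geq 1$ for all $i\leq r_X$ follows by ordering. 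You instead use the affine parameterization $\mathbf{Z}=\mathbf{V}_X\mathbf{V}_X^T+\mathbf{Z}_n$ with $\mathbf{V}_X^T\mathbf{Z}_n=\mathbf{0}$ (which the paper establishes only in its motivation subsection, not inside this proof), derive the exact identity $\mathbf{Z}^T\mathbf{Z}=\mathbf{V}_X\mathbf{V}_X^T+\mathbf{Z}_n^T\mathbf{Z}_n\succeq\mathbf{V}_X\mathbf{V}_X^T$, and apply Weyl's monotonicity. The two are close cousins (Weyl's theorem is itself a corollary of Courant--Fischer), but your version is somewhat more informative: it exhibits the excess $\mathbf{Z}_n^T\mathbf{Z}_n$ explicitly, which makes the source of any increase in the spectrum transparent, whereas the paper's variational argument is shorter and needs only the single relation $\mathbf{V}_X^T=\mathbf{V}_X^T\mathbf{Z}$. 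Your closing remarks on non-uniqueness of the minimizer and on the attainment computation $\|\mathbf{V}_X\mathbf{V}_X^T-[\mathbf{V}_X]_{1:m}[\mathbf{V}_X]_{1:m}^T\|_F^2=r_X-m$ are also consistent with the paper, which likewise claims only optimality, not uniqueness.
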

The proof of this theorem is based on the following lemma.
\begin{lemma}(Courant-Fischer Minimax Theorem \cite{Golub-1996-MatrixCom})\label{lem:cfmt}
For any symmetric matrix $\mathbf{A}\in \mathbb{R}^{n\times n}$, we have that
$$
\lambda_i(\mathbf{A})=\max\limits_{\dim(\mathcal{S})=i}
\min\limits_{\mathbf{0}\neq\mathbf{y}\in\mathcal{S}}\mathbf{y}^T\mathbf{A}\mathbf{y}/\mathbf{y}^T\mathbf{y}, \ \mbox{for} \ i = 1,2,...,n,
$$
where $\mathcal{S}\subset \mathbb{R}^n$ is some subspace
and $\lambda_i(\mathbf{A})$ is the $i$-th largest eigenvalue of $\mathbf{A}$.
\end{lemma}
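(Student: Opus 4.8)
The plan is to reduce everything to the behaviour of the Rayleigh quotient in the eigenbasis and then prove the two inequalities ($\geq$ and $\leq$) separately. Since $\mathbf{A}$ is symmetric, I would first invoke the spectral theorem to write $\mathbf{A}=\mathbf{Q}\mathbf{\Lambda}\mathbf{Q}^T$, where $\mathbf{Q}=[\mathbf{q}_1,\ldots,\mathbf{q}_n]$ is orthonormal, $\mathbf{\Lambda}=\mbox{diag}(\lambda_1,\ldots,\lambda_n)$, and $\lambda_1\geq\lambda_2\geq\cdots\geq\lambda_n$ are the eigenvalues in decreasing order so that $\lambda_i=\lambda_i(\mathbf{A})$. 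Expanding any $\mathbf{0}\neq\mathbf{y}$ in this basis as $\mathbf{y}=\sum_j c_j\mathbf{q}_j$, orthonormality turns the quotient into $\mathbf{y}^T\mathbf{A}\mathbf{y}/\mathbf{y}^T\mathbf{y}=\left(\sum_j\lambda_j c_j^2\right)/\left(\sum_j c_j^2\right)$, i.e., a convex combination of the eigenvalues with weights $c_j^2/\sum_k c_k^2$. With the quotient in this form, I can control it simply by restricting which coefficients $c_j$ are allowed to be nonzero.

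For the lower bound, I would exhibit a single good subspace. Take $\mathcal{S}_0=\mbox{span}\{\mathbf{q}_1,\ldots,\mathbf{q}_i\}$, which has dimension exactly $i$. For any nonzero $\mathbf{y}\in\mathcal{S}_0$ only the first $i$ coefficients survive, so the quotient is a convex combination of $\lambda_1,\ldots,\lambda_i$ and is therefore $\geq\lambda_i$; hence $\min_{\mathbf{0}\neq\mathbf{y}\in\mathcal{S}_0}$ of the quotient is $\geq\lambda_i$ (in fact equal, attained at $\mathbf{y}=\mathbf{q}_i$). Since the outer maximum ranges over all $i$-dimensional subspaces, it can only be larger, giving $\max_{\dim(\mathcal{S})=i}\min_{\mathbf{y}}(\cdots)\geq\lambda_i$.

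For the reverse inequality I must show that \emph{no} $i$-dimensional subspace can beat $\lambda_i$. Fix an arbitrary $\mathcal{S}$ with $\dim(\mathcal{S})=i$ and consider the complementary eigen-subspace $\mathcal{T}=\mbox{span}\{\mathbf{q}_i,\mathbf{q}_{i+1},\ldots,\mathbf{q}_n\}$, of dimension $n-i+1$. The crux, and the step I expect to be the main obstacle, is a dimension-counting argument: because $\dim(\mathcal{S})+\dim(\mathcal{T})=i+(n-i+1)=n+1>n$, the two subspaces cannot be in direct sum, so $\mathcal{S}\cap\mathcal{T}$ contains a nonzero vector $\mathbf{y}$. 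For that $\mathbf{y}$ only the coefficients $c_i,\ldots,c_n$ are nonzero, so its quotient is a convex combination of $\lambda_i,\ldots,\lambda_n$ and is therefore $\leq\lambda_i$. Consequently $\min_{\mathbf{0}\neq\mathbf{y}\in\mathcal{S}}(\cdots)\leq\lambda_i$ for \emph{every} such $\mathcal{S}$, so the maximum over all $i$-dimensional $\mathcal{S}$ still obeys $\leq\lambda_i$.

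Combining the two inequalities yields the claimed equality $\lambda_i(\mathbf{A})=\max_{\dim(\mathcal{S})=i}\min_{\mathbf{0}\neq\mathbf{y}\in\mathcal{S}}\mathbf{y}^T\mathbf{A}\mathbf{y}/\mathbf{y}^T\mathbf{y}$. The only points I would take care with are the bookkeeping of the convex-combination bounds when some weights vanish, and a clean statement of the intersection-dimension formula $\dim(\mathcal{S}\cap\mathcal{T})\geq\dim(\mathcal{S})+\dim(\mathcal{T})-n$ in $\mathbb{R}^n$; both are routine once the spectral reduction is in hand.
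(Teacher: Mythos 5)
Your proof is correct and complete: the spectral reduction of the Rayleigh quotient to a convex combination of eigenvalues, the choice $\mathcal{S}_0=\mbox{span}\{\mathbf{q}_1,\ldots,\mathbf{q}_i\}$ for the lower bound, and the dimension-counting intersection of $\mathcal{S}$ with $\mbox{span}\{\mathbf{q}_i,\ldots,\mathbf{q}_n\}$ for the upper bound is exactly the standard Courant--Fischer argument found in the cited reference \cite{Golub-1996-MatrixCom}. The paper itself gives no proof of this lemma (it is quoted as a known result and used only as a tool in the proof of Theorem~\ref{thm:frr}), so there is nothing to diverge from; your write-up is the canonical argument, and your two flagged care-points (vanishing weights and the formula $\dim(\mathcal{S}\cap\mathcal{T})\geq\dim(\mathcal{S})+\dim(\mathcal{T})-n$) are indeed routine and handled correctly.
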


\begin{proof}
First, by the well known Eckart-Young theorem \cite{Eckart-1936-EY}, given $\mathbf{Z}$, we have
\begin{equation}
\min\limits_{\mathbf{L},\mathbf{R}}\|\mathbf{Z}-\mathbf{L}\mathbf{R}\|_F^2
=\sum\limits_{i = m+1}^d\sigma_i^2(\mathbf{Z}),\label{eq:eyt}
\end{equation}
where $\sigma_i(\mathbf{Z})$ is the $i$-th largest singular value of $\mathbf{Z}$. Now we prove that
\begin{equation}
\mbox{if} \ \mathbf{X}=\mathbf{X}\mathbf{Z} \ \mbox{then} \ \sigma_{r_X}(\mathbf{Z}) \geq 1.\label{eq:sigma}
\end{equation}
By $\mathbf{X}=\mathbf{X}\mathbf{Z}$, we have that $\mbox{rank}(\mathbf{Z}) \geq r_X$.
Then (\ref{eq:eyt}) and (\ref{eq:sigma}) imply that the minimum objective function value is no less than $r_X-m$.
Indeed, by the compact SVD of $\mathbf{X}$ and $\mathbf{X}=\mathbf{X}\mathbf{Z}$, we have
\begin{equation}
\mathbf{V}_{X}^T = \mathbf{V}_{X}^T\mathbf{Z},\label{eq:uzu}
\end{equation}
By Lemma~\ref{lem:cfmt},
$\sigma_i(\mathbf{Z})=\max\limits_{\dim(\mathcal{S})=i}\min\limits_{\mathbf{0}\neq\mathbf{y}\in\mathcal{S}}
\|\mathbf{Z}^T\mathbf{y}\|_2/\|\mathbf{y}\|_2$, where $\|\cdot\|_2$ is the $l_2$ norm of a vector.
So by choosing $\mathcal{S}=\mathcal{R}(\mathbf{V}_{X})$ and utilizing (\ref{eq:uzu}),
\begin{equation}
\begin{array}{rcl}
\sigma_{r_X}(\mathbf{Z}) & \geq & \min\limits_{\mathbf{0}\neq\mathbf{y}\in
\mathcal{R}(\mathbf{V}_{X})}\|\mathbf{Z}^T\mathbf{y}\|_2/\|\mathbf{y}\|_2\\
& = & \min\limits_{\mathbf{b}\neq \mathbf{0}}\|\mathbf{Z}^T\mathbf{V}_{X}\mathbf{b}\|_2/\|\mathbf{V}_{X}\mathbf{b}\|_2\\
& = & \min\limits_{\mathbf{b}\neq \mathbf{0}}\|\mathbf{V}_{X}\mathbf{b}\|_2/\|\mathbf{V}_{X}\mathbf{b}\|_2=1.\\
\end{array}
\end{equation}
Next, when $\mathbf{Z}=\mathbf{V}_{X}\mathbf{V}_{X}^T$, it can be easily checked that
the objective function value is $(r_X-m)$. Again, by Eckart-Young theorem, $\mathbf{L}\mathbf{R}=[\mathbf{V}_{X}]_{1:m}[\mathbf{V}_{X}]_{1:m}^T$.
Thus we have $(\mathbf{V}_{X}\mathbf{V}_{X}^T, [\mathbf{V}_{X}]_{1:m}, [\mathbf{V}_{X}]_{1:m}^T)$ is a globally
optimal solution to (\ref{eq:frr}), thereby completing the proof of the theorem.
\end{proof}

Based on Theorem~\ref{thm:frr}, we can derive the following corollary to illustrate the structure of the optimal solution to (\ref{eq:frr}).
\begin{corollary}\label{cor:frr1}
Under the assumption that subspaces are independent and data $\mathbf{X}$ is clean, there exists a globally optimal
solution ($\mathbf{Z}^*,\mathbf{L}^*,\mathbf{R}^*$) to problem (\ref{eq:frr}) with the following
structure:
\begin{equation}
\mathbf{Z}^* = \mbox{diag}(\mathbf{Z}_1,\mathbf{Z}_2,...,\mathbf{Z}_k),\label{eq:block}
\end{equation}
where $\mathbf{Z}_i$ is an $n_i\times n_i$ matrix with $\mbox{rank}(\mathbf{Z}_i)=d_{C_i}$ and
\begin{equation}
\mathbf{L}^*{\mathbf{R}^*}\in\mathcal{R}(\mathbf{Z}^*)=\mathcal{R}(\mathbf{X}^T).\label{eq:lrz}
\end{equation}
\end{corollary}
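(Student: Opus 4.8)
The plan is to exploit Theorem~\ref{thm:frr}, which already hands us an explicit globally optimal solution, and then to show that this particular solution inherits the classical block-diagonal structure of the shape interaction matrix under the independence assumption. Concretely, I would take $(\mathbf{Z}^*,\mathbf{L}^*,\mathbf{R}^*) := (\mathbf{V}_{X}\mathbf{V}_{X}^T,[\mathbf{V}_{X}]_{1:m},[\mathbf{V}_{X}]_{1:m}^T)$ as furnished by the theorem, so that the task reduces entirely to analyzing the structure of $\mathbf{Z}^* = \mathbf{V}_{X}\mathbf{V}_{X}^T = \mathbf{X}^{\dag}\mathbf{X}$, the orthogonal projector onto the row space $\mathcal{R}(\mathbf{X}^T)$. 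In this way the non-convexity of (\ref{eq:frr}) never has to be confronted directly: Theorem~\ref{thm:frr} has already collapsed it to a statement about the SIM.

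The core step is to establish block-diagonality. Here I would introduce the factorization $\mathbf{X} = \mathbf{B}\mathbf{Y}$, where $\mathbf{B} = [\mathbf{B}_1,\ldots,\mathbf{B}_k]$ collects orthonormal bases $\mathbf{B}_i$ of the subspaces $\mathcal{C}_i$ and $\mathbf{Y} = \mbox{diag}(\mathbf{Y}_1,\ldots,\mathbf{Y}_k)$ is the block-diagonal coordinate matrix determined by $\mathbf{X}_i = \mathbf{B}_i\mathbf{Y}_i$. Independence of the subspaces means precisely that $\mathbf{B}$ has full column rank, equivalently that $\mathbf{B}^T$ has full row rank, so that $\mathcal{R}(\mathbf{X}^T) = \mathcal{R}(\mathbf{Y}^T\mathbf{B}^T) = \mathcal{R}(\mathbf{Y}^T)$. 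Since the orthogonal projector onto a given subspace is unique, this identity of row spaces forces $\mathbf{X}^{\dag}\mathbf{X} = \mathbf{Y}^{\dag}\mathbf{Y}$. Because $\mathbf{Y}$ is block-diagonal, so is its pseudoinverse, and hence $\mathbf{Z}^* = \mathbf{Y}^{\dag}\mathbf{Y} = \mbox{diag}(\mathbf{Y}_1^{\dag}\mathbf{Y}_1,\ldots,\mathbf{Y}_k^{\dag}\mathbf{Y}_k)$; setting $\mathbf{Z}_i := \mathbf{Y}_i^{\dag}\mathbf{Y}_i$, an $n_i\times n_i$ matrix, establishes (\ref{eq:block}).

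It then remains to verify the rank of each block and the range condition (\ref{eq:lrz}). Because $\mathbf{B}_i$ has full column rank we have $\mbox{rank}(\mathbf{Y}_i) = \mbox{rank}(\mathbf{X}_i) = d_{C_i}$ (under sufficient sampling, i.e. $\mbox{rank}(\mathbf{X}_i) = d_{C_i}$), and therefore $\mbox{rank}(\mathbf{Z}_i) = \mbox{rank}(\mathbf{Y}_i^{\dag}\mathbf{Y}_i) = \mbox{rank}(\mathbf{Y}_i) = d_{C_i}$. For the range condition, the columns of $\mathbf{L}^*\mathbf{R}^* = [\mathbf{V}_{X}]_{1:m}[\mathbf{V}_{X}]_{1:m}^T$ are linear combinations of the columns of $[\mathbf{V}_{X}]_{1:m}$, hence lie in $\mathcal{R}(\mathbf{V}_{X}) = \mathcal{R}(\mathbf{X}^T)$, while $\mathcal{R}(\mathbf{Z}^*) = \mathcal{R}(\mathbf{V}_{X}\mathbf{V}_{X}^T) = \mathcal{R}(\mathbf{V}_{X}) = \mathcal{R}(\mathbf{X}^T)$, which yields (\ref{eq:lrz}).

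I expect the main obstacle to be the block-diagonality step. The subtlety is that the natural geometric factorization $\mathbf{X} = \mathbf{B}\mathbf{Y}$ is generally \emph{not} orthogonal across blocks (the subspaces $\mathcal{C}_i$ need not be mutually orthogonal), so one cannot read the block structure off $\mathbf{B}\mathbf{Y}$ directly; the argument must instead pass to the orthogonal projector $\mathbf{V}_{X}\mathbf{V}_{X}^T$. The independence hypothesis is exactly what allows this passage: it is what guarantees $\mathbf{B}$ has full column rank, which is what equates $\mathcal{R}(\mathbf{X}^T)$ with $\mathcal{R}(\mathbf{Y}^T)$ and thereby transfers the block structure of $\mathbf{Y}$ to $\mathbf{Z}^*$. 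Without independence $\mathbf{B}$ loses full column rank, the two row spaces no longer coincide, and the block-diagonal conclusion fails.
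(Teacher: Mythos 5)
Your proof is correct, and its overall skeleton matches the paper's: both start from the explicit optimizer $(\mathbf{V}_{X}\mathbf{V}_{X}^T,[\mathbf{V}_{X}]_{1:m},[\mathbf{V}_{X}]_{1:m}^T)$ supplied by Theorem~\ref{thm:frr} and reduce the corollary to the structure of the shape interaction matrix $\mathbf{V}_{X}\mathbf{V}_{X}^T$. The difference is in how that structure is obtained. The paper simply invokes Lemma~\ref{lem:sim}, imported from \cite{Jo-1998-Factor}, as a black box; you instead prove it from scratch via the factorization $\mathbf{X}=\mathbf{B}\mathbf{Y}$, the identity $\mathcal{R}(\mathbf{X}^T)=\mathcal{R}(\mathbf{Y}^T)$ under full column rank of $\mathbf{B}$, and the uniqueness of the orthogonal projector, giving $\mathbf{X}^{\dag}\mathbf{X}=\mathbf{Y}^{\dag}\mathbf{Y}=\mbox{diag}(\mathbf{Y}_1^{\dag}\mathbf{Y}_1,\ldots,\mathbf{Y}_k^{\dag}\mathbf{Y}_k)$. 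Each step checks out: independence is exactly full column rank of $\mathbf{B}$, a block-diagonal matrix has block-diagonal pseudoinverse, and $\mathbf{M}^{\dag}\mathbf{M}$ is the unique orthogonal projector onto $\mathcal{R}(\mathbf{M}^T)$. What your route buys is a self-contained argument that makes visible precisely where the two hypotheses enter: independence drives block-diagonality, while the rank claim $\mbox{rank}(\mathbf{Z}_i)=d_{C_i}$ additionally needs sufficient sampling ($\mbox{rank}(\mathbf{X}_i)=d_{C_i}$), an assumption left implicit in the corollary's statement and buried inside the cited lemma; you correctly flag it. Your verification of (\ref{eq:lrz}) for the specific pair $([\mathbf{V}_{X}]_{1:m},[\mathbf{V}_{X}]_{1:m}^T)$ is all the existence statement requires, whereas the paper asserts the slightly stronger (and unargued) claim that \emph{any} optimal $\mathbf{L}^*,\mathbf{R}^*$ lies in $\mathcal{R}(\mathbf{Z}^*)$; your more modest version is actually the cleaner fit to what the corollary states.
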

The proof of this corollary is based on the following lemma.

\begin{lemma}\cite{Jo-1998-Factor}\label{lem:sim}
Let $\mathbf{X}=\mathbf{U}_{X}\Sigma_{X}\mathbf{V}_{X}^T$  be the compact SVD.
Under the same assumption in Corollary~\ref{cor:frr1}, $\mathbf{V}_{X}\mathbf{V}_{X}^T$ is a block diagonal matrix
that has exactly $k$ blocks. Moreover, the $i$-th block on its diagonal is an $n_i\times n_i$ matrix with rank $d_{C_i}$.
\end{lemma}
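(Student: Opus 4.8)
The plan is to exploit a factorization of $\mathbf{X}$ that is forced by independence together with sufficient sampling, and then to recognize $\mathbf{V}_{X}\mathbf{V}_{X}^T$ as an orthogonal projector whose block structure is inherited directly from that factorization. First I would fix, for each subspace, a basis matrix $\mathbf{B}_i\in\mathbb{R}^{d\times d_{C_i}}$ whose columns span $\mathcal{C}_i$, and stack them as $\mathbf{B}=[\mathbf{B}_1,\ldots,\mathbf{B}_k]$. Since every column of $\mathbf{X}_i$ lies in $\mathcal{C}_i$, we can write $\mathbf{X}_i=\mathbf{B}_i\mathbf{P}_i$ for a coordinate matrix $\mathbf{P}_i\in\mathbb{R}^{d_{C_i}\times n_i}$; because $\mathbf{X}_i$ is sufficiently sampled (its columns span $\mathcal{C}_i$), $\mathbf{P}_i$ has full row rank $d_{C_i}$. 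Collecting the blocks yields the key identity $\mathbf{X}=\mathbf{B}\mathbf{P}$ with $\mathbf{P}=\mbox{diag}(\mathbf{P}_1,\ldots,\mathbf{P}_k)$ block diagonal, and by independence of the subspaces $\mathbf{B}$ has full column rank $\sum_i d_{C_i}=r_X$.

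The second step is to establish $\mathcal{R}(\mathbf{X}^T)=\mathcal{R}(\mathbf{P}^T)$. The inclusion $\mathcal{R}(\mathbf{X}^T)\subseteq\mathcal{R}(\mathbf{P}^T)$ is immediate from $\mathbf{X}^T=\mathbf{P}^T\mathbf{B}^T$, and the reverse inclusion holds because $\mathbf{B}^T$ is surjective onto $\mathbb{R}^{r_X}$ (it has full row rank, as $\mathbf{B}$ has full column rank). With this equality in hand, I would use the fact that $\mathbf{V}_{X}\mathbf{V}_{X}^T$ is precisely the orthogonal projector onto the row space $\mathcal{R}(\mathbf{X}^T)$, while $\mathbf{P}^{\dag}\mathbf{P}$ is the orthogonal projector onto $\mathcal{R}(\mathbf{P}^T)$; since orthogonal projectors onto the same subspace coincide, we conclude $\mathbf{V}_{X}\mathbf{V}_{X}^T=\mathbf{P}^{\dag}\mathbf{P}$.

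Finally I would read off the block structure. For a block-diagonal $\mathbf{P}$ the pseudoinverse is $\mathbf{P}^{\dag}=\mbox{diag}(\mathbf{P}_1^{\dag},\ldots,\mathbf{P}_k^{\dag})$, which is verified by checking the four Moore-Penrose conditions blockwise, so $\mathbf{V}_{X}\mathbf{V}_{X}^T=\mathbf{P}^{\dag}\mathbf{P}=\mbox{diag}(\mathbf{P}_1^{\dag}\mathbf{P}_1,\ldots,\mathbf{P}_k^{\dag}\mathbf{P}_k)$ has exactly $k$ diagonal blocks. The $i$-th block $\mathbf{P}_i^{\dag}\mathbf{P}_i$ is an $n_i\times n_i$ orthogonal projector onto $\mathcal{R}(\mathbf{P}_i^T)$, and since $\mathbf{P}_i$ has full row rank $d_{C_i}$, this block has rank $d_{C_i}$, exactly as claimed.

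I expect the main obstacle to be pinning down the range equality $\mathcal{R}(\mathbf{X}^T)=\mathcal{R}(\mathbf{P}^T)$, which is precisely where subspace independence enters: independence is exactly what guarantees that $\mathbf{B}$ has full column rank, so that $\mathbf{B}^T$ is surjective. Everything else — the factorization $\mathbf{X}=\mathbf{B}\mathbf{P}$, the identification of $\mathbf{V}_{X}\mathbf{V}_{X}^T$ with a projector, and the block-diagonality of the pseudoinverse — is routine linear algebra. A secondary point worth stating carefully is the distinct role of sufficient sampling: it is needed only to force each $\mathbf{P}_i$ to have full row rank, which is what makes the $i$-th block have rank \emph{exactly} $d_{C_i}$ rather than merely at most $d_{C_i}$.
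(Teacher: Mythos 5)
Your proof is correct, and there is nothing in the paper to compare it against step by step: the paper does not prove this lemma at all, but imports it by citation to the shape interaction matrix literature \cite{Jo-1998-Factor}. Your argument is essentially the classical one from that source: factor $\mathbf{X}=\mathbf{B}\mathbf{P}$ with $\mathbf{B}$ a stacked basis and $\mathbf{P}$ block diagonal, note that independence makes $\mathbf{B}$ full column rank so that $\mathcal{R}(\mathbf{X}^T)=\mathcal{R}(\mathbf{P}^T)$, identify $\mathbf{V}_X\mathbf{V}_X^T$ and $\mathbf{P}^{\dag}\mathbf{P}$ as orthogonal projectors onto this common subspace (hence equal), and read off the block structure from the blockwise pseudoinverse. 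All the individual steps check out: $\mathbf{P}^{\dag}\mathbf{P}$ is indeed the orthogonal projector onto $\mathcal{R}(\mathbf{P}^T)$, the pseudoinverse of a block-diagonal matrix is the block diagonal of the pseudoinverses (verifiable from the four Moore--Penrose conditions), and $\mbox{rank}(\mathbf{P}_i^{\dag}\mathbf{P}_i)=\mbox{rank}(\mathbf{P}_i)=d_{C_i}$ under full row rank. Your closing remark is also a genuine improvement in precision over the paper: Corollary~\ref{cor:frr1} states only independence and cleanliness as hypotheses, yet the conclusion $\mbox{rank}(\mathbf{Z}_i)=d_{C_i}$ silently requires sufficient sampling of each subspace (defined only in a footnote of the paper); your proof correctly isolates this as the condition forcing each $\mathbf{P}_i$ to have full row rank, without which the $i$-th block would merely have rank $\mbox{rank}(\mathbf{X}_i)\leq d_{C_i}$.
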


\begin{proof}
By the proof of Theorem~\ref{thm:frr}, we have that $\mathbf{Z}^*=\mathbf{V}_{X}\mathbf{V}_{X}^T$
is a global optimal solution to (\ref{eq:frr}) and any global optimal $\mathbf{L}^*$ and $\mathbf{R}^*$
are in the range space $\mathcal{R}(\mathbf{Z}^*)$. So we have that
$\mathbf{L}^*\mathbf{R}^*\in\mathcal{R}(\mathbf{Z}^*)=\mathcal{R}(\mathbf{X}^T)$.
By Lemma~\ref{lem:sim}, we achieve the block diagonal structure (\ref{eq:block}) for $\mathbf{Z}^*$, which concludes the proof.
\end{proof}

However, such $\mathbf{Z}^*$ suffers from the same limitation of LRR. Namely,
when the data sampling is insufficient, $\mathbf{Z}^*$ will probably degenerate and thus the clustering may fail.

Fortunately, as shown in (\ref{eq:lrz}), $\mathbf{L}^*\mathbf{R}^*$ can still be spanned by the row space of $\mathbf{X}$.
This inspires us to consider this lower rank representation for subspace clustering.
\begin{corollary}\label{cor:frr2}
Assuming that the columns of $\mathbf{Z}^*$ are normalized (i.e. $\mathbf{1}_n^T\mathbf{Z}^* = \mathbf{1}_n^T$) and fix $m = k$,
then there exists globally optimal $\mathbf{L}^*$ and $\mathbf{R}^*$ to problem (\ref{eq:frr}) such that
\begin{equation}
\mathbf{L}^*\mathbf{R}^*=\mbox{diag}(n_1\mathbf{1}_{n_1}\mathbf{1}_{n_1}^T,n_2\mathbf{1}_{n_2}\mathbf{1}_{n_2}^T,...,
n_k\mathbf{1}_{n_k}\mathbf{1}_{n_k}^T).\label{eq:lrblock}
\end{equation}
\end{corollary}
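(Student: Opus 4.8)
The plan is to reduce the corollary to a spectral analysis of the block-diagonal matrix $\mathbf{Z}^*=\mathbf{V}_{X}\mathbf{V}_{X}^T$ supplied by Corollary~\ref{cor:frr1}, and then to exploit the freedom in the Eckart--Young construction of $\mathbf{L}^*\mathbf{R}^*$. First I would record the facts already in hand: by Corollary~\ref{cor:frr1} there is a global optimum with $\mathbf{Z}^*=\mbox{diag}(\mathbf{Z}_1,\dots,\mathbf{Z}_k)$, where $\mbox{rank}(\mathbf{Z}_i)=d_{C_i}$, and with $\mathbf{L}^*\mathbf{R}^*\in\mathcal{R}(\mathbf{Z}^*)=\mathcal{R}(\mathbf{X}^T)$. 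The crucial observation is that $\mathbf{Z}^*$ is the orthogonal projector onto $\mathcal{R}(\mathbf{X}^T)$, so it is symmetric and idempotent and all of its nonzero singular values equal $1$, with multiplicity $r_X=\sum_i d_{C_i}$. Since each subspace is at least one-dimensional we have $d_{C_i}\geq 1$, hence $m=k\leq r_X$ and Theorem~\ref{thm:frr} applies.

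Next I would turn the normalization hypothesis into spectral information. For each $i$ let $\mathbf{e}_i\in\mathbb{R}^n$ be the vector equal to $\mathbf{1}_{n_i}$ on the $i$-th block and zero elsewhere. The block-diagonal form reduces $\mathbf{1}_n^T\mathbf{Z}^*=\mathbf{1}_n^T$ to $\mathbf{1}_{n_i}^T\mathbf{Z}_i=\mathbf{1}_{n_i}^T$, and since $\mathbf{Z}_i$ is symmetric this is equivalent to $\mathbf{Z}_i\mathbf{1}_{n_i}=\mathbf{1}_{n_i}$, so $\mathbf{Z}^*\mathbf{e}_i=\mathbf{e}_i$. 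Thus every $\mathbf{e}_i$ is an eigenvector of $\mathbf{Z}^*$ for the eigenvalue $1$, the $\mathbf{e}_i$ are mutually orthogonal because their supports are disjoint, and $\|\mathbf{e}_i\|_2^2=n_i$. In particular each $\mathbf{e}_i$ lies in the top singular subspace $\mathcal{R}(\mathbf{X}^T)$ of $\mathbf{Z}^*$, not merely in $\mathcal{R}(\mathbf{X}^T)$ with some intermediate eigenvalue.

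I would then build the optimal factors from these vectors. The proof of Theorem~\ref{thm:frr} identifies the optimal $\mathbf{L}^*\mathbf{R}^*$ with the best rank-$k$ approximation of $\mathbf{Z}^*$ via Eckart--Young~\cite{Eckart-1936-EY}; because the top singular value $1$ is repeated with multiplicity $r_X\geq k$, this best approximation is not unique, and any $k$ orthonormal vectors drawn from the eigenvalue-$1$ eigenspace constitute an admissible choice of leading singular vectors. Selecting the orthonormal system $\{\mathbf{e}_i/\sqrt{n_i}\}_{i=1}^{k}$, Eckart--Young yields $\mathbf{L}^*\mathbf{R}^*=\sum_{i=1}^{k}(\mathbf{e}_i/\sqrt{n_i})(\mathbf{e}_i/\sqrt{n_i})^T$, which is exactly the block-diagonal matrix whose $i$-th block is the constant matrix $\tfrac{1}{n_i}\mathbf{1}_{n_i}\mathbf{1}_{n_i}^T$, recovering the structure claimed in (\ref{eq:lrblock}) up to the per-block normalizing scalar. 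Membership of each $\mathbf{e}_i$ in $\mathcal{R}(\mathbf{X}^T)$ guarantees that this factorization is genuinely attainable within the optimal set described by Theorem~\ref{thm:frr}.

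The step I expect to be the main obstacle is precisely the selection argument in this degenerate spectrum: everything hinges on legitimately using $\mathbf{e}_1,\dots,\mathbf{e}_k$ as the leading singular vectors, which is only permissible because the repeated singular value $1$ makes the Eckart--Young truncation non-unique, and because the normalization hypothesis forces the indicators to be genuine eigenvectors for that maximal value. A secondary point worth stating explicitly is the inequality $k\leq r_X$, without which the rank-$k$ truncation could not simultaneously absorb all $k$ orthogonal indicators; this is exactly where the independence of the subspaces together with $d_{C_i}\geq 1$ enters.
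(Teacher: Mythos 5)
Your proof is correct and follows essentially the same route as the paper's: invoke Corollary~\ref{cor:frr1} for the block-diagonal structure, use the normalization hypothesis to show the block indicator vectors are eigenvectors of $\mathbf{Z}^*$ with eigenvalue $1$, and then choose them as the leading vectors in the Eckart--Young rank-$k$ truncation. You are in fact more careful than the paper on two points it glosses over --- the symmetry of $\mathbf{Z}^*$ needed to pass from the row-sum condition $\mathbf{1}_{n_i}^T\mathbf{Z}_i=\mathbf{1}_{n_i}^T$ to the eigenvector condition $\mathbf{Z}_i\mathbf{1}_{n_i}=\mathbf{1}_{n_i}$, and the fact that the repeated top singular value (multiplicity $r_X\geq k$, since $\mathbf{Z}^*$ is a projector) is what legitimizes selecting the indicators as leading singular vectors --- and your remark that the result holds only up to the per-block scalar correctly flags a normalization discrepancy in the stated equation (\ref{eq:lrblock}) that the paper's own proof shares.
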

\textbf{Remark:} Corollary~\ref{cor:frr2} does not guarantee that an arbitrary rank-$k$ optimal solution has the block-diagonal structure
(\ref{eq:lrblock}) due to the non-unique of the minimizer $(\mathbf{L}^*, \mathbf{R}^*)$. However, in our experiments,
we have observed that empirically choosing the first $k$ columns of $\mathbf{V}_X$ works well on the tested data (e.g., Fig. \ref{fig:structure}).

\begin{proof}
By Corollary~\ref{cor:frr1} and the normalization assumption,
$\mathbf{Z}^* = diag(\mathbf{Z}_1^*,\mathbf{Z}_2^*,...,\mathbf{Z}_k^*)$,
where $\mathbf{Z}_i^*$ is an $n_i\times n_i$ for subspace $\mathcal{C}_i$ and
$\mathbf{1}_{n_i}$ is an eigenvector of $\mathbf{Z}_i^*$ with eigenvalue $1$. Thus there exists a basis $\mathbf{H}=[\mathbf{h}_1,\mathbf{h}_2,...,\mathbf{h}_k]$,
each vector of which with the form $\mathbf{h}_i = [\mathbf{0},\mathbf{1}_{n_i}^T,\mathbf{0}]^T$
is eigenvector of $\mathbf{Z}$ with eigenvalue $1$.
By the Eckart-Young theorem (similar to the proof of Theorem~\ref{thm:frr}),
we have that $\mathbf{L}^*=\mathbf{H}$ and $\mathbf{R}^*=\mathbf{H}^T$ are global optimal solutions to (\ref{eq:frr}),
which directly leads (\ref{eq:lrblock}).
\end{proof}

In principle, the normalization of $\mathbf{Z}^*$ could be considered as a strong assumption, hence
it cannot always be guaranteed in real situations. Therefore, we explicitly enforce each column of $\mathbf{Z}$ to sum to one
\begin{equation}
\min\limits_{\mathbf{Z},\mathbf{L,\mathbf{R}}}\|\mathbf{Z}-\mathbf{L}\mathbf{R}\|_F^2, \ s.t. \ \mathbf{X} = \mathbf{X}\mathbf{Z},
\ \mathbf{1}_n^T\mathbf{Z} = \mathbf{1}_n^T.\label{eq:frrn}
\end{equation}

\subsection{Sparse Regularization for Corruptions}\label{sec:robust}
In real applications, the data are often corrupted by both small noises and gross outliers.
In the following, we show how to extend problem (\ref{eq:frrn}) to deal with corruptions.
By modeling corruptions as a new term $\mathbf{E}$, we consider the following regularized optimization problem
\begin{equation}
\begin{array}{c}
\min\limits_{\mathbf{Z}, \mathbf{L}, \mathbf{R}, \mathbf{E}}\|\mathbf{Z}-\mathbf{L}\mathbf{R}\|_F^2 + \mu\|\mathbf{E}\|_{s},\\
\ s.t. \ \mathbf{X}=\mathbf{X}\mathbf{Z} + \mathbf{E}, \ \mathbf{1}_n^T\mathbf{Z}=\mathbf{1}_n^T,\label{eq:rfrrn}
\end{array}
\end{equation}
where the parameter $\mu >0$ is used to balance the effects of the two terms and $\|\cdot\|_s$ is a sparse norm
corresponding to our assumption on $\mathbf{E}$.
Here we adopt the $l_{2,1}$ norm
to characterize the corruptions since it can successfully identify the indices of the outliers and remove small noises \cite{Liu-2011-LRR1}.
Algorithm~\ref{alg:frr} summarizes the whole FRR based subspace clustering framework.
\begin{algorithm}
   \caption{FRR for Subspace Clustering}
\begin{algorithmic}
   \STATE {\bfseries Input:} Let $\mathbf{X} \in \mathbb{R}^{d \times n}$ be a set of
   data points sampled from $k$ subspaces.
   \STATE {\bfseries Step 1:} Solve (\ref{eq:rfrrn}) to obtain ($\mathbf{Z}^*$, $\mathbf{L}^*$, $\mathbf{R}^*$).
   \STATE {\bfseries Step 2:} Construct a graph by using $(|\mathbf{Z}^*|+|(\mathbf{Z}^*)^T|)$ or
   $(|\mathbf{L}^*\mathbf{R}^*|+|(\mathbf{L}^*\mathbf{R}^*)^T|)$ as the affinity matrix.
   \STATE {\bfseries Step 3:} Apply NCut to this graph to obtain the clustering.
\end{algorithmic}\label{alg:frr}
\end{algorithm}

\section{Extending FRR for Feature Extraction}\label{sec:tfrr}
Besides subspace clustering, the mechanism of FRR can also be applied for feature extraction.
That is, one can recover the column space of the data set by solving the following transposed FRR (TFRR)
\begin{equation}
\min\limits_{\mathbf{Z},\mathbf{L,\mathbf{R}}}\|\mathbf{Z}-\mathbf{L}\mathbf{R}\|_F^2, \ s.t. \ \mathbf{X} = \mathbf{Z}\mathbf{X},\label{eq:tfrr}
\end{equation}
where $m\leq r_X$, $\mathbf{L} \in \mathbb{R}^{d\times m}$, $\mathbf{R} \in \mathbb{R}^{m \times d}$ and $\mathbf{Z}\in\mathbb{R}^{d\times d}$.
For noisy data, by using similar techniques as in Section~\ref{sec:robust}, we introduce an explicit corruption term $\mathbf{E}$
into the objective function and the constraint. Hence we obtain the
robust version of TFRR for feature extraction
\begin{equation}
\min\limits_{\mathbf{Z}, \mathbf{L}, \mathbf{R}, \mathbf{E}}\|\mathbf{Z}-\mathbf{L}\mathbf{R}\|_F^2 + \mu\|\mathbf{E}\|_{s},
\ s.t. \ \mathbf{X}=\mathbf{Z}\mathbf{X} + \mathbf{E}.\label{eq:trfrr}
\end{equation}

\subsection{Relationship to Principal Component Analysis}
Principal component analysis (PCA) is one of the most popular dimensionality reduction techniques~\cite{pearson,hotelling}. The basic ideas
behind PCA date back to Pearson in 1901~\cite{pearson}, and a more
general procedure was described by Hotelling~\cite{hotelling} in
1933. There are several energy functions which lead to subspace spanned by the principal components~\cite{Torre2012}. For instance, PCA finds the matrix  $\mathbf{P}\in\mathbb{R}^{d\times m}$ that minimizes:
\begin{equation}
\min\limits_{\mathbf{P}}\|\mathbf{X}-\mathbf{P}\mathbf{P}^T\mathbf{X}\|_F^2, \ s.t. \ \mathbf{P}^T\mathbf{P} = \mathbf{I}_m.\label{eq:pca}
\end{equation}
It can be shown that $\mathbf{P}^*=[\mathbf{U}_{X}]_{1:m}$ is the optimal solution to (\ref{eq:pca}), where
$[\mathbf{U}_{X}]_{1:m}=[[\mathbf{U}_{X}]_{1},[\mathbf{U}_{X}]_{2},\cdots,[\mathbf{U}_{X}]_{m}]$.
The following corollary shows that the mechanism of TFRR can also be applied to formulate PCA.
\begin{corollary}
For any fixed $m \leq r_X$,
$(\mathbf{Z}^*,\mathbf{L}^*,\mathbf{R}^*):=(\mathbf{U}_{X}\mathbf{U}_{X}^T, [\mathbf{U}_{X}]_{1:m}, [\mathbf{U}_{X}]_{1:m}^T)$ is a globally optimal solution
to~(\ref{eq:tfrr}) and the minimum objective function value is $(r_X-m)$.
\end{corollary}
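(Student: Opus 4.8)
The plan is to reduce (\ref{eq:tfrr}) to the already-solved problem (\ref{eq:frr}) by transposition, so that the result falls out as an immediate corollary of Theorem~\ref{thm:frr}. The guiding observation is that TFRR is nothing but FRR applied to $\mathbf{X}^T$ instead of $\mathbf{X}$, with the roles of the left and right singular vectors interchanged.

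First I would transpose the constraint: $\mathbf{X} = \mathbf{Z}\mathbf{X}$ is equivalent to $\mathbf{X}^T = \mathbf{X}^T\mathbf{Z}^T$. Setting $\mathbf{Y} := \mathbf{X}^T$ and $\mathbf{W} := \mathbf{Z}^T$, this reads $\mathbf{Y} = \mathbf{Y}\mathbf{W}$, which is exactly the FRR constraint. Since the Frobenius norm is invariant under transposition and $(\mathbf{L}\mathbf{R})^T = \mathbf{R}^T\mathbf{L}^T$ with $\mathbf{R}^T \in \mathbb{R}^{d\times m}$ and $\mathbf{L}^T \in \mathbb{R}^{m\times d}$, the objective satisfies $\|\mathbf{Z} - \mathbf{L}\mathbf{R}\|_F^2 = \|\mathbf{W} - \mathbf{R}^T\mathbf{L}^T\|_F^2$, which is precisely the FRR objective written in the transposed variables, and the shape of the factorization is preserved. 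Hence (\ref{eq:tfrr}) is identical to (\ref{eq:frr}) with $\mathbf{X}$ replaced by $\mathbf{Y} = \mathbf{X}^T$.

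Next I would carry out the SVD bookkeeping. The compact SVD of $\mathbf{Y} = \mathbf{X}^T$ is $\mathbf{Y} = \mathbf{V}_X\Sigma_X\mathbf{U}_X^T$, so its rank is again $r_X$ and its right singular matrix is $\mathbf{U}_X$. Applying Theorem~\ref{thm:frr} to $\mathbf{Y}$ then yields a global optimum $\mathbf{W}^* = \mathbf{U}_X\mathbf{U}_X^T$ with optimal factors given by $[\mathbf{U}_X]_{1:m}$ and $[\mathbf{U}_X]_{1:m}^T$, and minimum value $r_X - m$. Transposing back and using that $\mathbf{U}_X\mathbf{U}_X^T$ is symmetric produces $\mathbf{Z}^* = \mathbf{U}_X\mathbf{U}_X^T$, $\mathbf{L}^* = [\mathbf{U}_X]_{1:m}$, $\mathbf{R}^* = [\mathbf{U}_X]_{1:m}^T$, with the same objective value $r_X - m$, exactly as claimed.

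There is essentially no hard step here: once the transposition is set up, everything is mechanical. The only point requiring a little care is confirming that the rank constraint on $\tilde{\mathbf{Z}} = \mathbf{L}\mathbf{R}$ is genuinely symmetric under transposition, i.e.\ that $\mathbf{R}^T\mathbf{L}^T$ ranges over exactly the $d\times d$ matrices of rank at most $m$ as $(\mathbf{L},\mathbf{R})$ ranges over its admissible shapes, which it does. Alternatively, for a self-contained argument one may simply replay the proof of Theorem~\ref{thm:frr} verbatim: Eckart--Young bounds $\min_{\mathbf{L},\mathbf{R}}\|\mathbf{Z}-\mathbf{L}\mathbf{R}\|_F^2$ from below by $\sum_{i=m+1}^{d}\sigma_i^2(\mathbf{Z})$, while Lemma~\ref{lem:cfmt} with $\mathcal{S} = \mathcal{R}(\mathbf{U}_X)$ shows $\sigma_{r_X}(\mathbf{Z}) \ge 1$ whenever $\mathbf{X} = \mathbf{Z}\mathbf{X}$, which together give the value $r_X - m$ and its attainment at $\mathbf{Z}^* = \mathbf{U}_X\mathbf{U}_X^T$.
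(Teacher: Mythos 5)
Your proposal is correct and matches the paper's approach: the paper simply states that the proof of Theorem~\ref{thm:frr} carries over directly, which is exactly your second route (replaying the argument with $\mathcal{S}=\mathcal{R}(\mathbf{U}_X)$), and your transposition reduction is just a tidier formalization of the same observation. You supply the bookkeeping the paper leaves implicit, but there is no substantive difference in method.
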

\begin{proof}
The proof of Theorem~\ref{thm:frr} directly leads to the above corollary.
\end{proof}

\section{Optimization for FRR}\label{sec:solver}
In this section, we develop a fast numerical solver for FRR related models by extending the
classic alternating direction method (ADM) \cite{Lin-2009-IALM} to non-convex problems. To solve the problem~(\ref{eq:rfrrn})\footnote{As other
FRR related models can be solved in similar way, we do not further explore them in this section.},
we introduce Lagrange multipliers $\Lambda$ and $\Pi$ to remove the equality constraints. The resulting augmented Lagrangian function is
\begin{equation}
\begin{array}{ll}
\mathcal{L}_A(\mathbf{Z}, \mathbf{L}, \mathbf{R}, \mathbf{E}, \Lambda, \Pi)
=\|\mathbf{Z}-\mathbf{L}\mathbf{R}\|_F^2 + \mu\|\mathbf{E}\|_{2,1} \\
+ \langle \Lambda, \mathbf{X}-\mathbf{X}\mathbf{Z}-\mathbf{E}\rangle + \langle \Pi, \mathbf{1}_n^T\mathbf{Z}-\mathbf{1}^T\rangle \\
+ \frac{\beta}{2}(\|\mathbf{X}-\mathbf{X}\mathbf{Z}-\mathbf{E}\|_F^2+\|\mathbf{1}_n^T\mathbf{Z}-\mathbf{1}_n\|_F^2),
\end{array}\label{eq:alf}
\end{equation}
where $\beta > 0$ is a penalty parameter.
It is important to note that although (\ref{eq:alf}) is not jointly convex for all variables, it is convex with respect
to each variable while fixing the others. This property allows the iteration scheme to be well defined.
So we minimize (\ref{eq:alf}) with respect to $\mathbf{L}$, $\mathbf{R}$, $\mathbf{Z}$,
and $\mathbf{E}$ one at a time while fixing the others at their latest values, and then update the Lagrange multipliers $\Lambda$ and $\Pi$:
\begin{eqnarray}
\mathbf{L}_{+}& \leftarrow  &\mathbf{Z}\mathbf{R}^{\dag}\equiv\mathbf{Z}\mathbf{R}^T(\mathbf{R}\mathbf{R}^T)^{\dag},\label{eq:l}\\
\mathbf{R}_{+}& \leftarrow  &\mathbf{L}_+^{\dag}\mathbf{Z}\equiv(\mathbf{L}_+^T\mathbf{L}_+)^{\dag}\mathbf{L}_+^{T}\mathbf{Z},\label{eq:r}\\
\mathbf{Z}_{+}& \leftarrow &(2\mathbf{I}_n+\beta(\mathbf{X}^T\mathbf{X}+\mathbf{1}_n\mathbf{1}_n^T))^{-1}\mathbf{B},\label{eq:z}\\
\mathbf{E}_{+}& \leftarrow & \mathop{\arg\min}\limits_{\mathbf{E}}\mu\|\mathbf{E}\|_{2,1}+\frac{\beta}{2}\|\mathbf{C}-\mathbf{E}\|_F^2,\label{eq:e}\\
\Lambda_{+}& \leftarrow & \Lambda + \beta(\mathbf{X}-\mathbf{X}\mathbf{Z}_{+}-\mathbf{E}_{+}),\\
\Pi_{+}& \leftarrow & \Pi + \beta(\mathbf{1}_n^T\mathbf{Z}_+-\mathbf{1}_{n}^T),\\
\beta_{+}& \leftarrow & \min(\bar{\beta},\rho\beta),\label{eq:beta}
\end{eqnarray}
where the subscript $+$ denotes that the values are updated, $\bar{\beta}$ is the upper bound of $\beta$, $\rho > 1$ is the step length parameter, $\mathbf{B}=
2\mathbf{L}_{+}\mathbf{R}_{+} + \beta(\mathbf{X}^T\mathbf{X} - \mathbf{X}^T(\mathbf{E}-\Lambda/\beta)) + \beta\mathbf{1}_n\mathbf{1}_n^T-\mathbf{1}_n\Pi$
and $\mathbf{C}=\mathbf{X}-\mathbf{X}\mathbf{Z}_{+} + \Lambda/\beta$.
The subproblem (\ref{eq:e}) can be solved by Lemma 3.2 in \cite{Liu-2010-LRR}.
We then reduce the computational cost for solving (\ref{eq:l}) and (\ref{eq:r}). It follows from~(\ref{eq:r}) that
\begin{equation}
\mathbf{L}_+\mathbf{R}_+ = \mathbf{L}_+(\mathbf{L}_+^T\mathbf{L}_+)^{\dag}\mathbf{L}_+^{T}\mathbf{Z} = \mathcal{P}_{L_+}(\mathbf{Z}).
\end{equation}
By considering the compact SVD: $\mathbf{R}=\mathbf{U}_{R_r}\Sigma_{R_r}\mathbf{V}_{R_r}^T$, we have
$\mathbf{L}_+ = \mathbf{Z}\mathbf{V}_{R_r}\Sigma_{R_r}^{-1}\mathbf{U}_{R_r}^T$ and $\mathbf{Z}\mathbf{R}^T=\mathbf{Z}\mathbf{V}_{R_r}\Sigma_{R_r}\mathbf{U}_{R_r}^T$. This implies
that $\mathcal{R}(\mathbf{L}_+) = \mathcal{R}(\mathbf{Z}\mathbf{R}^T)=\mathcal{R}(\mathbf{Z}\mathbf{V}_{R_r})$ and
\begin{equation}
\mathbf{L}_+\mathbf{R}_+ = \mathcal{P}_{ZR^T}(\mathbf{Z}),\label{eq:range}
\end{equation}
where $\mathcal{P}_{ZR^T}$ is the orthogonal projection into $\mathcal{R}(\mathbf{Z}\mathbf{R}^T)$.
Since the objective function of (\ref{eq:rfrrn}) depends on the product $\mathbf{L}_+\mathbf{R}_+$, different values of $\mathbf{L}_+$
and $\mathbf{R}_+$ are essentially equivalent as long as they give the same product. The identity~(\ref{eq:range}) shows that the inversion $(\mathbf{R}\mathbf{R}^T)^{\dag}$
and $(\mathbf{L}_+^T\mathbf{L}_+)^{\dag}$ can be saved when the projection $\mathcal{P}_{ZR^T}$ is computed. Specifically, one can compute
$\mathcal{P}_{ZR^T}=\mathbf{Q}\mathbf{Q}^T$,
where $\mathbf{Q}$ is the QR factorization of $\mathbf{Z}\mathbf{R}^T$. Then we have $\mathbf{L}_+\mathbf{R}_+ = \mathbf{Q}\mathbf{Q}^T\mathbf{Z}$ and one can derive:
\begin{eqnarray}
\mathbf{L}_+ &\leftarrow& \mathbf{Q},\label{eq:l1}\\
\mathbf{R}_+ &\leftarrow& \mathbf{Q}^T\mathbf{Z}.\label{eq:r1}
\end{eqnarray}
The schemes (\ref{eq:l1}) and (\ref{eq:r1}) are often preferred since computing (\ref{eq:r1}) by QR factorization is generally more stable than
solving the normal equations~\cite{Shen-2011-LRF}.
The complete algorithm is summarized in Algorithm~\ref{alg:adm}.
\begin{algorithm}
   \caption{Solving (\ref{eq:rfrrn}) by ADM-type Algorithm}
\begin{algorithmic}
   \STATE {\bfseries Input:} Observation matrix $\mathbf{X} \in \mathbb{R}^{d \times n}$, $m>0$, $\epsilon_1, \epsilon_2 > 0$, parameters $\beta > 0$ and $\rho > 1$.
   \STATE {\bfseries Initialization:} Initialize $\mathbf{Z}_0\in\mathbb{R}^{n\times n}$, $\mathbf{L}_0\in\mathbb{R}^{n\times m}$, $\mathbf{R}_0\in \mathbb{R}^{m\times n}$, $\mathbf{E}_0\in\mathbb{R}^{d\times n}$, $\Lambda_0 \in\mathbb{R}^{d\times n}$ and $\Pi_0 \in \mathbb{R}^{1\times n}$.
   \WHILE {not converged}
   \STATE {\bfseries Step 1:} Update ($\mathbf{Z}$, $\mathbf{L}$, $\mathbf{R}$, $\mathbf{E}$, $\Lambda$, $\Pi$) by (\ref{eq:l1}), (\ref{eq:r1}) and (\ref{eq:z})--(\ref{eq:beta}).
   \STATE {\bfseries Step 2:} Check the convergence conditions:\\
   $\|\mathbf{X}-\mathbf{X}\mathbf{Z}_+-\mathbf{E}_+\|_{\infty}\leq \epsilon_1$ and
   $\|\mathbf{1}_n^T\mathbf{Z}_+-\mathbf{1}_n^T\|_{\infty}\leq \epsilon_2$.
   \ENDWHILE
   \STATE {\bfseries Output:} $\mathbf{Z}^*$, $\mathbf{L}^*$, $\mathbf{R}^*$ and $\mathbf{E}^*$.
\end{algorithmic}\label{alg:adm}
\end{algorithm}

\section{Experimental Results}\label{sec:exp}
This section compared the performance of FRR against state-of-the-art algorithms on both subspace clustering and feature extraction.  All experiments are performed on a notebook computer with an Intel Core i7 CPU at 2.00 GHz and 6GB of memory, running Windows 7 and Matlab version 7.10.

\subsection{Subspace Clustering}
We first consider the subspace clustering problem, and compare the clustering performance and computational speed of FRR
to existing state-of-the-art methods, such as SIM, Random Sample Consensus (RANSAC)~\cite{Fischler-1981-RANSAC}, Local Subspace Analysis (LSA)~\cite{Yan-2006-LSA}, SSC and LRR.
As shown in Section~\ref{sec:frr}, both $\mathbf{Z}$ and $\mathbf{L}\mathbf{R}$ can
be utilized for clustering, we call these two strategies FRR$_1$ and FRR$_2$, respectively.

\subsubsection{Synthetic Data}
We performed subspace clustering on synthetic data to illustrate the insufficient data sampling problem (to verify the analysis in Section~\ref{sec:frr}). Let $k$, $p$, $d_h$ and $d_l$ denote the number of subspaces, the number of points in each subspace, the features (i.e., observed dimension) and the intrinsic dimension of the subspace, respectively. Then the data set, parameterized as ($k, p, d_h, d_l$), is generated by the same procedure in \cite{Liu-2010-LRR}:
$k$ independent subspaces $\{\mathcal{C}_i\}_{i=1}^{k}$ are constructed, whose basis
$\{\mathbf{U}\}_{i=1}^{k}$ are computed by $\mathbf{U}_{i+1}=\mathbf{T}\mathbf{U}_{i}$,
$1 \leq i \leq k-1$, where $\mathbf{T}$ is a random rotation and $\mathbf{U}_1$ is a random
column orthogonal matrix of dimension $d_h \times d_l$. Then we construct
a $d_h \times kp$ data matrix $\mathbf{X}=[\mathbf{X}_1,\mathbf{X}_2,...,\mathbf{X}_{k}]$ by sampling
$p$ data vectors from each subspace by $\mathbf{X}_i=\mathbf{U}_i\mathbf{C}_i$,
$1\leq i\leq k$, with $\mathbf{C}_i$ being a $d_l \times p$ matrix with uniform distribution. To generate the point set for
insufficient data sampling clustering, we fix
$k = 10$, $d_h = 100$ and $d_l = 50$ and vary $p \in [10,30]$. In this way, the number of samples in each subspace (at most 30) is less than the
intrinsic dimension (50 for each subspace).

Fig.~\ref{fig:structure} illustrated the structures of $\mathbf{Z}=\mathbf{V}_{X}\mathbf{V}_{X}^T$
and $\mathbf{L}\mathbf{R}= [\mathbf{V}_{X}]_{1:k}[\mathbf{V}_{X}]_{1:k}^T$ when $p = 10$. Since the data sampling
is insufficient, the optimal $\mathbf{Z}$ for (\ref{eq:lrr}) and (\ref{eq:frr}) reduces to $\mathbf{I}_n$ (see Fig.~\ref{fig:structure} (a)). In contrast, $\mathbf{L}\mathbf{R}$ can successfully reveal the multiple subspace structure (see Fig.~\ref{fig:structure} (b)).
\begin{figure}[t]
\center
\begin{tabular}{cc}
\includegraphics[width=0.4\textwidth,
keepaspectratio]{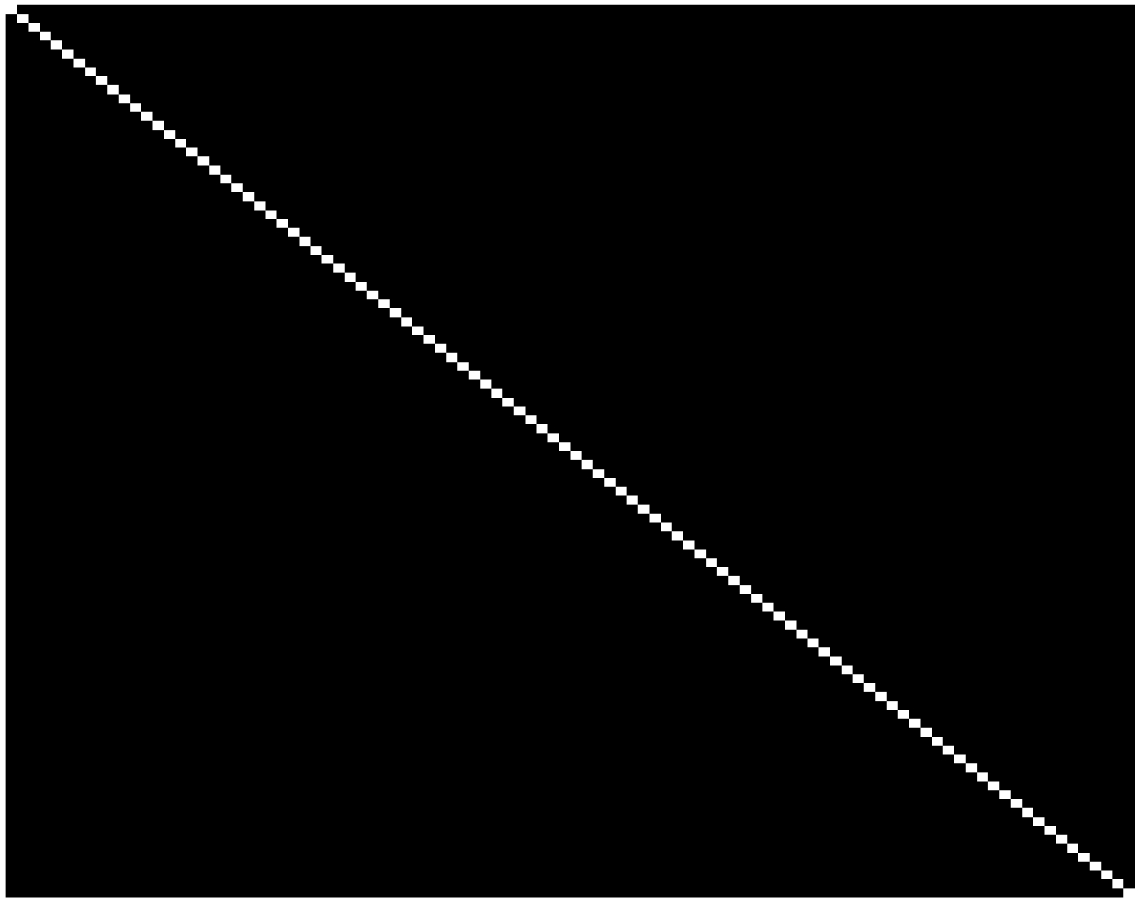}
&\includegraphics[width=0.4\textwidth,
keepaspectratio]{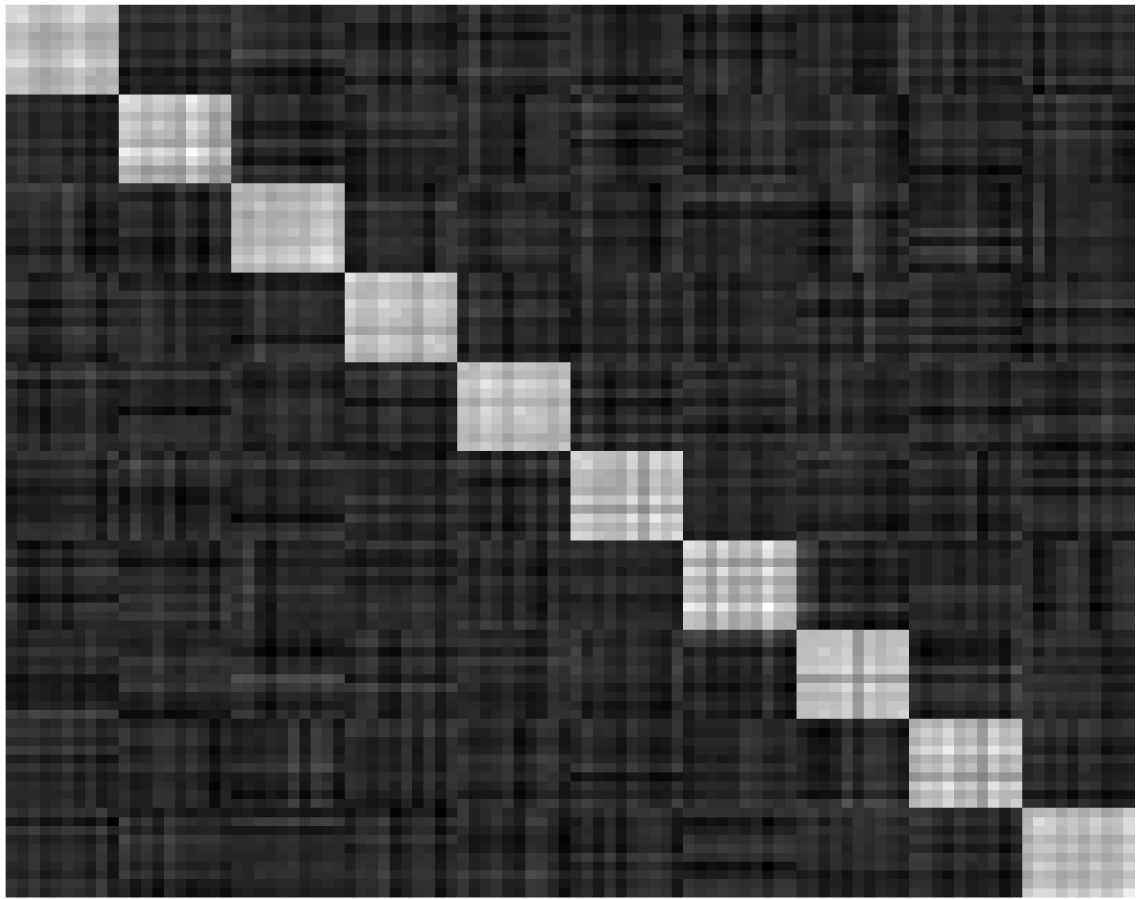}\\
(a) $\mathbf{Z}$ & (b) $\mathbf{L}\mathbf{R} $ \\
\end{tabular}
\caption{The structures of $\mathbf{Z}$ and
$\mathbf{L}\mathbf{R}$, where
$rank(\mathbf{Z})=rank(\mathbf{X})=kp=100$ and $rank(\mathbf{L}\mathbf{R})=k=10$, respectively.}\label{fig:structure}
\end{figure}

We also compared the clustering performances of $\mathbf{Z}$ and $\mathbf{L}\mathbf{R}$ on the generated data.
Fig.~\ref{fig:zvslr} shows the clustering accuracy as a function of the number of points. It can be seen that the clustering accuracy of $\mathbf{Z}$ is very sensitive to the particular sampling. Although it performs better when $p$ is increasing, the highest clustering accuracy is only around $80\%$ ($p = 30$). In contrast, $\mathbf{L}\mathbf{R}$ achieves almost perfect results on all data sets. This confirms that the affinity matrix calculated from $\mathbf{L}\mathbf{R}$ can successfully overcome the drawback of using $\mathbf{Z}$ in (\ref{eq:frr}) and LRR (also SIM) when the data sampling is insufficient.
\begin{figure}[t]
\center
\includegraphics[width=0.8\textwidth,
keepaspectratio]{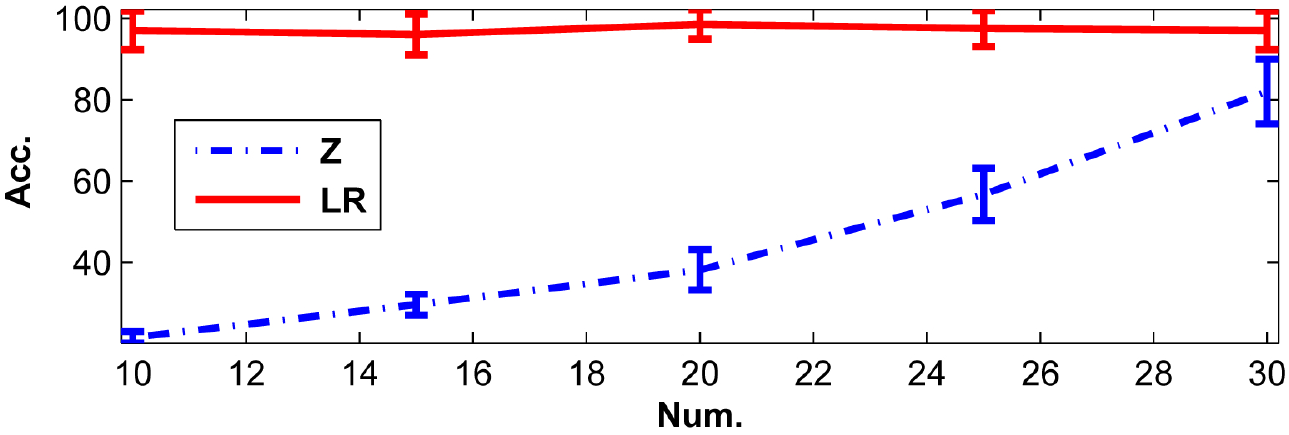}
\caption{The mean and std. clustering accuracies ($\%$) of $\mathbf{Z}$ and
$\mathbf{L}\mathbf{R}$ over 20 runs. The $x$-axis represents the number of samples in each subspace
and the $y$-axis represents the clustering accuracy.}\label{fig:zvslr}
\end{figure}

\subsubsection{Motion Segmentation}

Motion segmentation refers to the problem of segmenting tracked feature point trajectories of multiple moving objects in a video sequence. As shown in~\cite{Rao-2010-motion}, all the tracked points from a single rigid motion lie in a four-dimensional linear subspace. So this task can be regarded as a subspace clustering problem.
We perform the experiments on the Hopkins155 database~\cite{Tron-2007-Hop}, which is an extensive benchmark for motion segmentation. This database consists of 156 sequences of two or three motions thus there are 156 clustering tasks in total. For a fair comparison, we apply all algorithms
to the raw data and the parameters of these methods have been tuned to the best.

We reported the segmentation errors in Table~\ref{tab:motion} and presented the percentage of sequences for which the segmentation error is less than or equal to a given percentage of misclassification in Fig.~\ref{fig:motion}. It can be noticed that the performances of three sparsity-based models (i.e., SSC, LRR and FRR) are better than other methods.
SSC is worse than LRR because the 1D $l_1$ norm based criterion finds the representation coefficients of each vector individually, and there
is no global constrain. Although the basic forms of LRR (\ref{eq:lrr}) and FRR (\ref{eq:frr}) share the same optimal solution to $\mathbf{Z}$, FRR$_1$ performs even better than LRR in real data set. This is because enforcing the normalization constraint in (\ref{eq:rfrrn}) can improve the performance for clustering. Overall, FRR$_2$ outperforms all other methods in this paper. This result, again, confirms that $\mathbf{L}\mathbf{R}$ in FRR$_2$ is better than the general $\mathbf{Z}$ in LRR and FRR$_1$ for subspace clustering.
\begin{figure}[t]
\center
\begin{tabular}{cc}
\includegraphics[width=0.45\columnwidth]{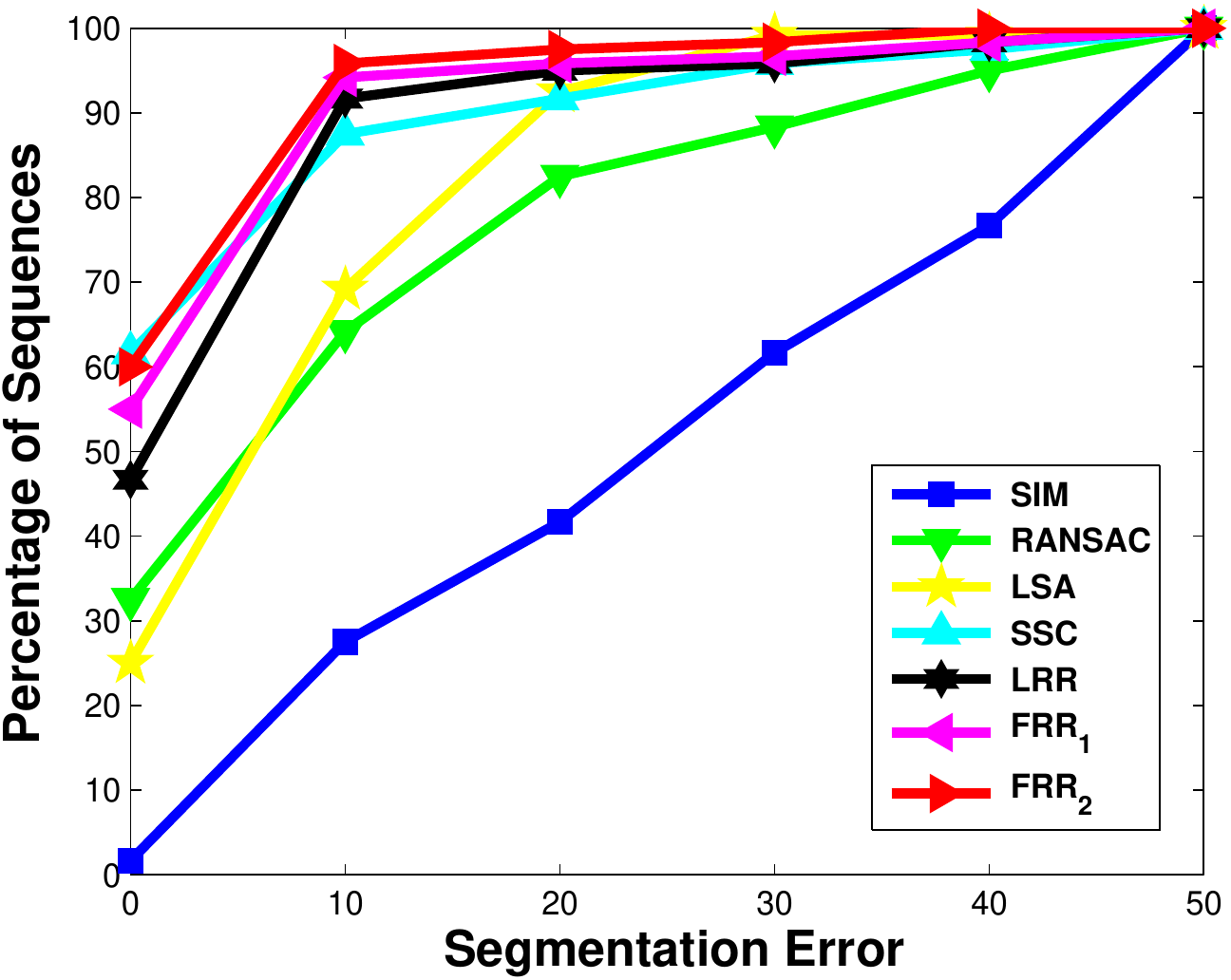}
&\includegraphics[width=0.45\columnwidth]{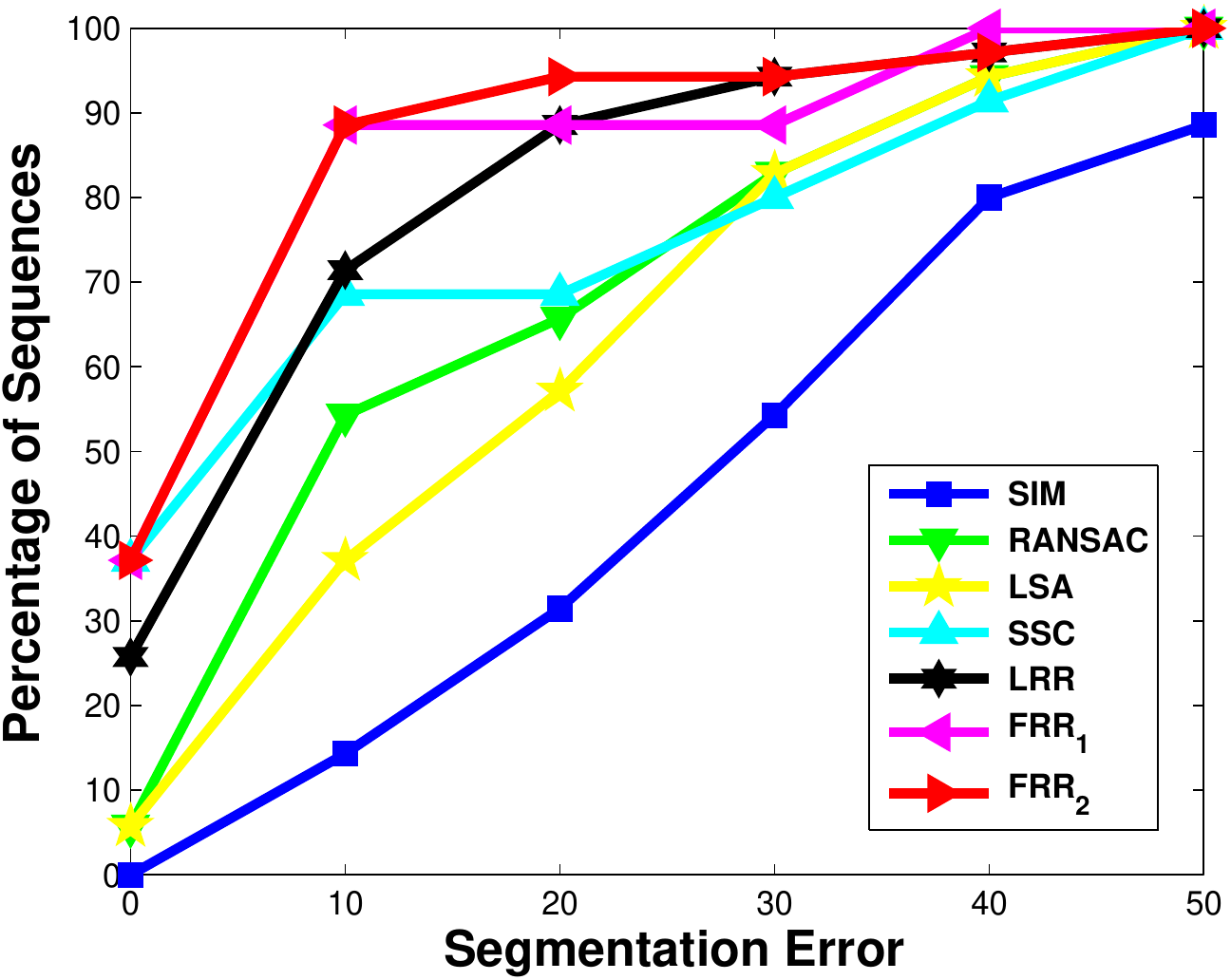}\\
(a) 2 Motions & (b) 3 Motions\\
\end{tabular}
\caption{Percentage of sequences for which the segmentation error is less than or equal to a given percentage of misclassification.}\label{fig:motion}
\end{figure}
\begin{table*}[t]
\center
\caption{Segmentation errors ($\%$) on Hopkins155 raw data.}\label{tab:motion}
\begin{tabular}{|c||c|c|c|c|c|c|c|c|c|c|c|c|}
\hline
\multirow{2}{*}{Method} & \multicolumn{4}{c|}{2 Motions} & \multicolumn{4}{c|}{3 Motions} & \multicolumn{4}{c|}{All (156)}\\\cline{2-13}
         & mean    & median & std.   & max.  & mean    & median  & std.   & max.  & mean  & median  & std.   & max. \\\hline
SIM      &  24.1   & 24.8   & 15.4   & 49.2  &  27.9   & 28.5    & 15.8   & 64.1  &  25.1 & 25.3    & 15.7   & 64.1\\
RANSAC   &  9.6    & 3.3    & 13.1   & 49.3  &  13.8   & 7.8     & 13.7   & 44.7  &  10.8 & 4.2     & 13.5   & 49.3\\
LSA      &  6.8    & 2.8    & 8.0    & 40.9  &  16.8   & 15.6    & 12.6   & 46.6  &  9.1  & 4.8     & 10.1   & 46.6\\
SSC      &  3.7    & \textbf{0.0}    & 9.7    & 49.9  &  11.4   & 3.3     & 15.0   & 44.6  &  5.5  & \textbf{0.0}     & 11.6   & 49.9\\
LRR      &  3.2    & 0.3    & 8.2    & 40.3  &  7.8    & 2.8     & 10.3   & 41.5  &  4.3  & 0.6     & 8.9    & \textbf{41.5}\\\hline
FRR$_1$ &  2.5    & \textbf{0.0}    & 7.4    & 40.8  &  5.9    & 1.4     & 10.9   & \textbf{39.4} &  3.5  & \textbf{0.0}  & 8.9    & 41.8\\
FRR$_2$ &  \textbf{1.8}  & \textbf{0.0} & \textbf{5.3} & \textbf{36.1}  &  \textbf{4.7} & \textbf{1.0} & \textbf{9.1}  & 41.5 &  \textbf{2.6} & \textbf{0.0}   & \textbf{6.5}    & \textbf{41.5}\\
\hline
\end{tabular}
\end{table*}

For three sparsity-based methods, Table~\ref{tab:moton_time} reports the time in seconds.
We can see that the computational time of SSC is lower than the standard LRR.
This is because the $l_1$ norm minimizations in SSC can be solved in parallel and there is only a thresholding process needed at each iteration. While  LRR is solved with an SVD in each iteration, and it does not scale well with large number
of samples.  By combining linearized ADM with an acceleration technique for SVD, the work in~\cite{Lin-2011-LADM} proposed a fast solver for LRR. The running time of this approach is even less than SSC.
Our FRR, again, achieves the highest efficiency because it completely avoids SVD computation in the iterations.
\begin{table}[ht]
\center
\caption{The average running time (seconds) per sequence for three sparsity-based methods. LRR(A) denotes the accelerated LRR proposed in~\cite{Lin-2011-LADM}.}\label{tab:moton_time}
\begin{tabular}{|c||c|c|c|}
\hline
Method &  2 Motions & 3 Motions & All (156)\\\hline
SSC  &  3.5445    & 7.8493    &  4.5057  \\\hline
LRR  &  38.5156   & 115.3140  &  55.6259       \\
LRR(A)  &  1.9415   & 3.6788  &  2.3319   \\\hline
FRR &  \textbf{0.9990}    & \textbf{2.2799}    &  \textbf{1.2847}       \\
\hline
\end{tabular}
\end{table}

\subsection{Feature Extraction and Outlier Detection}
This experiment tested the effectiveness of TFRR for feature extraction in presence of occlusions. To simulate sample-outliers,
we created a dataset by combining images with faces from the FRGC version 2~\cite{Philips-2005-FRGC} and
images non containing faces from Caltech-256~\cite{Griffin-2007-Caltech}. We selected $20$ images for the first $180$ subjects of the FRGC database, having a total of $3600$ images. For Caltech-256 database, which contains 257 image categories, we randomly selected 1 image from each class (a total of 257 non-facial images). All images are resized to $32 \times 36$ and the pixel values are normalized to $[0, 1]$. As shown in Fig.~\ref{fig:face},
there are two types of corruptions: small errors in the facial images (e.g., illuminations and occlusions) and non-facial outliers.
\begin{figure}[t]
\center
\includegraphics[width=0.8\columnwidth]{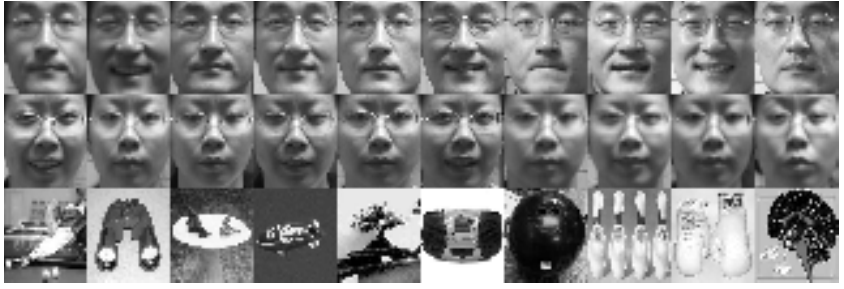}
\caption{Examples of the FRGC-Caltech data set.
The top two rows correspond to face images and the bottom
row non-face images.}\label{fig:face}
\end{figure}

The goal of this task is to robustly extract facial features and use them for classification.
 That is, we learn a mapping $\mathbf{P}$ between high dimensional observations and low dimensional features
using TFRR, and identify outliers in the training set by $\mathbf{E}$.
Then for a new testing data $\mathbf{x}$, the feature vector $\mathbf{y}$ can be computed as $\mathbf{y}=\mathbf{P}\mathbf{x}$. We selected
the first $k$ ($k = 40, 80$) identities and $257$ non-facial images as the training set and the remaining $(180-k)$ identities of facial images for test.
We compared two TFRR based strategies (one is directly using $\mathbf{P}=\mathbf{Z}$, called TFRR$_1$, and another is computing
the orthogonal basis $\mathbf{P}=\mbox{orth}(\mathbf{L}\mathbf{R})$, called TFRR$_2$) with the ``Raw data'' baseline and other state-of-the-art approaches, such as PCA, Locality Preserving Projection (LPP) \cite{He-2003-LPP} and Neighborhood Preserving Embedding (NPE) \cite{He-2005-NPE}. The parameters and the feature dimensions of all methods are tuned to the best for each training set. Table~\ref{tab:face} demonstrates that the performances of TFRR$_1$ and TFRR$_2$ are both significantly better than the baseline and PCA. Moreover, TFRR$_2$ outperforms all other methods on these experiments.

As shown in Fig.~\ref{fig:XZXE}, the main advantage of TFRR based methods comes from their ability of extracting intrinsic facial features and removing outliers. One can see that most of the intrinsic facial features can be projected into the range space (modeled by $\mathbf{Z}\mathbf{X}$, see the middle row), while the small errors of the facial images (e.g., illuminations and occlusions) and non-facial outliers (modeled by $\mathbf{E}$) can be automatically removed (see the bottom row).
\begin{table*}[ht]
\center
\caption{Classification accuracies (mean $\pm$ std.$\%$) on FRGC-Caltech data set. ``G$m$/P$n$'' means in the testing data $m$ images of each subject
 are randomly selected as gallery set
and the remaining $n$ images as probe set. Such a trial is repeated 20 times. The feature dimensions
are: PCA (410D, 358D), LPP (170D, 200D), NPE(320D, 160D) and TFRR$_2$ (190D, 100D). The dimension of the feature vector produced by TFRR$_1$ is the same as the observed data.}\label{tab:motion}
\begin{tabular}{|c|c||c|c|c|c|c|c|}
\hline
      Train                &   Test & Raw             & PCA            & LPP            & NPE            & TFRR$_1$    & TFRR$_2$    \\
\hline
\multirow{2}{*}{$40\times 20 + 257$} & G5/P15    & 71.1 $\pm$ 3.2  & 70.0 $\pm$ 3.2 & 85.2 $\pm$ 2.4 & 81.1 $\pm$ 2.7 & 81.5 $\pm$ 2.0 & \textbf{88.8 $\pm$ 2.7}\\
                                     & G10/P10& 82.8 $\pm$ 4.6  & 81.6 $\pm$ 4.6 & 92.2 $\pm$ 2.8 & 89.6 $\pm$ 3.6 & 89.9 $\pm$ 2.7 & \textbf{94.1 $\pm$ 2.1}\\\hline
\multirow{2}{*}{$80\times 20 + 257$} & G5/P15    & 72.3 $\pm$ 4.1  & 71.4 $\pm$ 4.1 & 85.4 $\pm$ 2.9 & 83.7 $\pm$ 4.2 & 82.9 $\pm$ 3.3 & \textbf{90.8 $\pm$ 2.1}\\
                                     & G10/P10& 82.6 $\pm$ 3.2  & 81.6 $\pm$ 3.2 & 91.4 $\pm$ 3.2 & 90.4 $\pm$ 3.2 & 90.1 $\pm$ 2.1 & \textbf{94.9 $\pm$ 2.9}\\
\hline
\end{tabular}\label{tab:face}
\end{table*}
\begin{figure}[t]
\center
\begin{tabular}{rc@{\extracolsep{0.2em}}cc@{\extracolsep{0.2em}}c}
\multicolumn{1}{r}{$\mathbf{X}$:}
&\includegraphics[width=0.1\textwidth,
keepaspectratio]{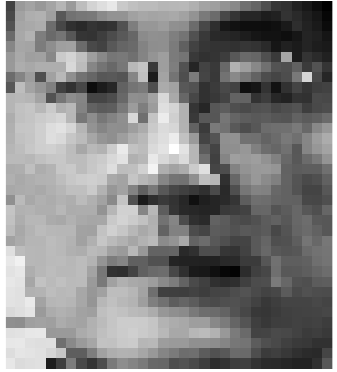}
&\includegraphics[width=0.1\textwidth,
keepaspectratio]{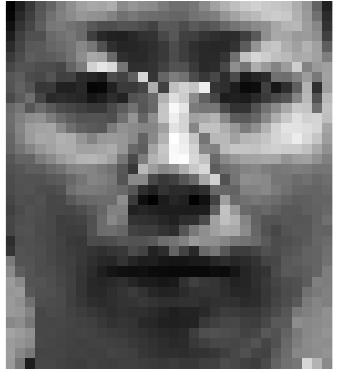}
&\includegraphics[width=0.1\textwidth,
keepaspectratio]{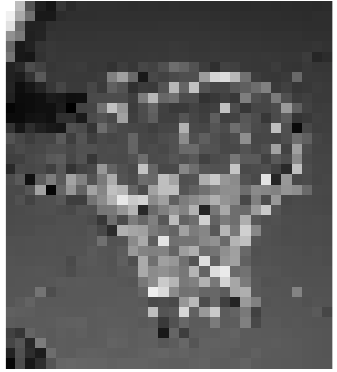}
&\includegraphics[width=0.1\textwidth,
keepaspectratio]{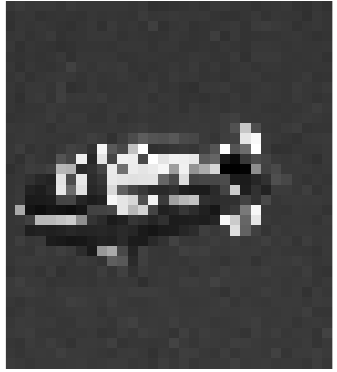}\\
$\mathbf{Z}\mathbf{X}$:
&\includegraphics[width=0.1\textwidth,
keepaspectratio]{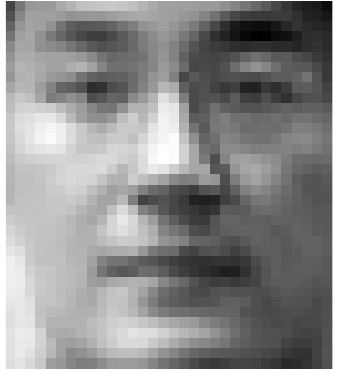}
&\includegraphics[width=0.1\textwidth,
keepaspectratio]{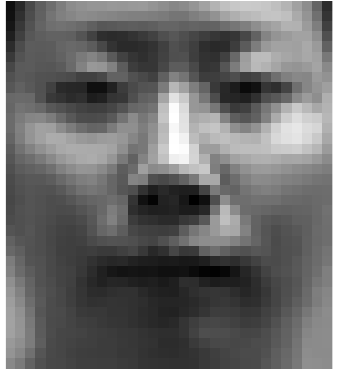}
&\includegraphics[width=0.1\textwidth,
keepaspectratio]{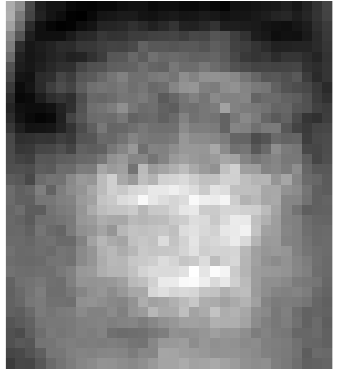}
&\includegraphics[width=0.1\textwidth,
keepaspectratio]{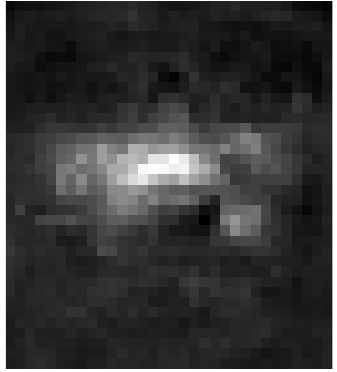}\\
$\mathbf{E}$:
&\includegraphics[width=0.1\textwidth,
keepaspectratio]{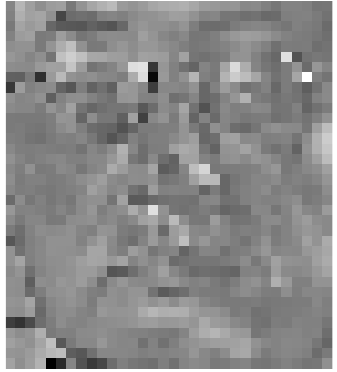}
&\includegraphics[width=0.1\textwidth,
keepaspectratio]{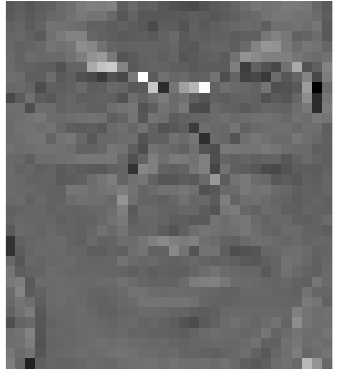}
&\includegraphics[width=0.1\textwidth,
keepaspectratio]{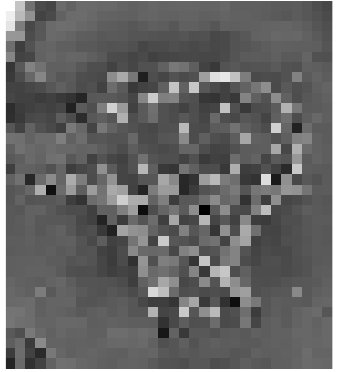}
&\includegraphics[width=0.1\textwidth,
keepaspectratio]{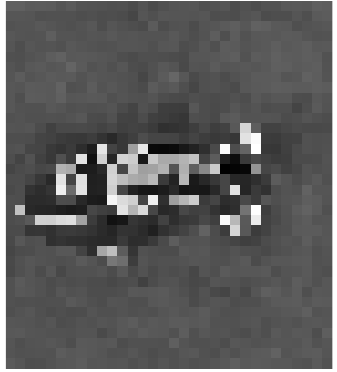}\\
\end{tabular}
\caption{Some examples of using TFRR to recover the intrinsic facial features and remove small errors and outliers
(modeled by $\mathbf{X}=\mathbf{Z}\mathbf{X} + \mathbf{E}$). The left two columns correspond to facial samples and the right two are non-facial samples. The middle row shows the features extracted by our algorithm $(\mathbf{Z}\mathbf{X})$ and the bottom row shows the corruptions ($\mathbf{E}$).}\label{fig:XZXE}
\end{figure}

Fig.~\ref{fig:outlier} plotted the energies (in terms of $l_2$ norm) for the columns of $\mathbf{E}$. One can see that the values of non-facial samples (last 257 columns in $\mathbf{E}$) are obviously larger than
that of facial samples. Therefore, the error term $\mathbf{E}$ can also be used to detect the non-facial outliers. Namely the $i$-th sample in $\mathbf{X}$ is considered as outlier if and only if $\|[\mathbf{E}]_i\|_{2} \geq \gamma$. By setting the parameter $\gamma = 2.2$, the outlier detection accuracies\footnote{These
accuracies are obtained by computing the percentage of correctly identified outliers. One may also consider the receiver operator characteristic (ROC) and
compute its area under curve (AUC) \cite{Liu-2011-LRR} to evaluate the performance.} are $98.68\%$ on the $40\times 20 + 257$ data and $99.19\%$ on $80\times 20 + 257$ data, respectively.
\begin{figure}[t]
\center
\begin{tabular}{cc}
\includegraphics[width=0.45\columnwidth]{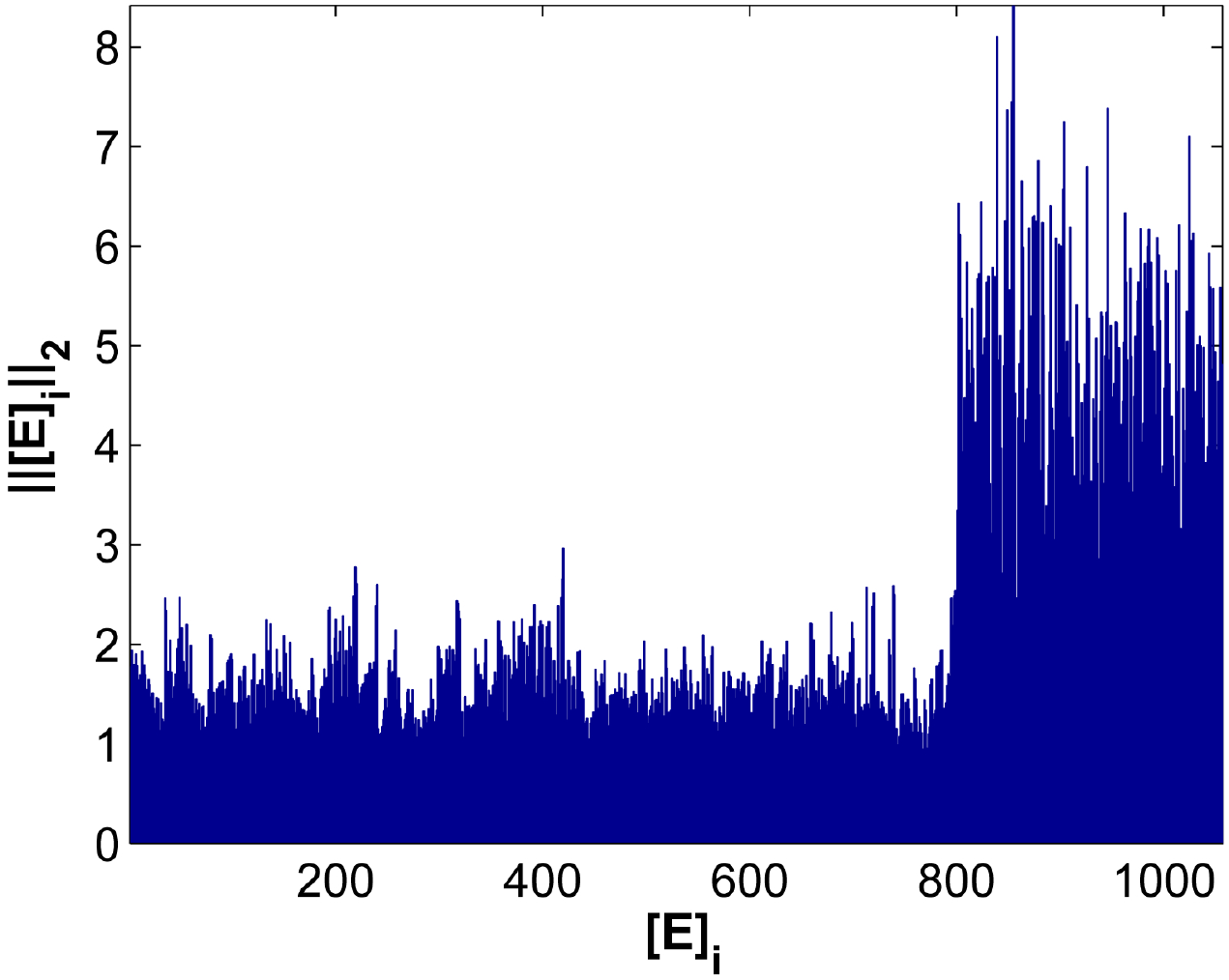}
&\includegraphics[width=0.45\columnwidth]{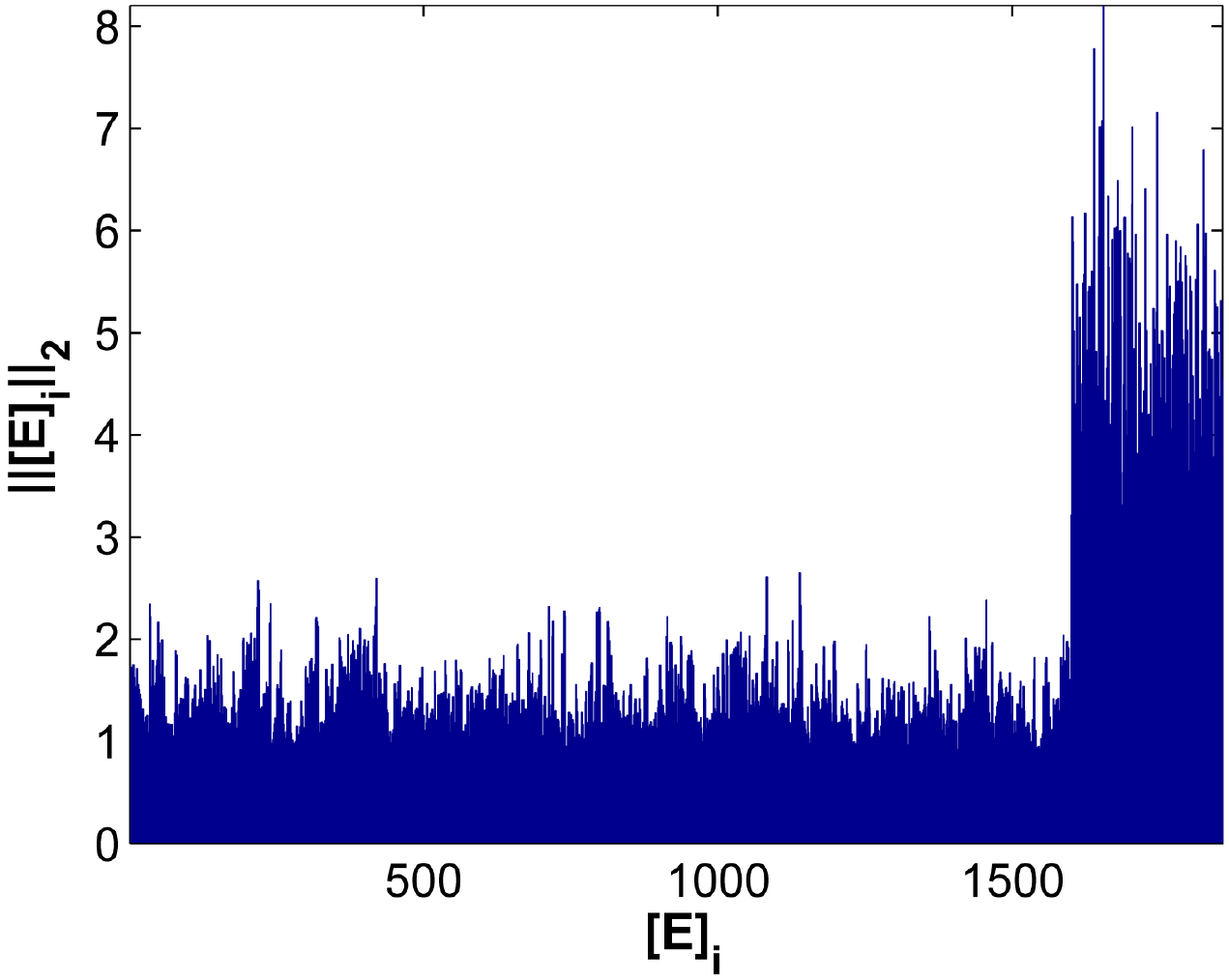}\\
(a) $40\times 20 + 257$ & (b) $80\times 20 + 257$\\
\end{tabular}
\caption{The $l_2$ norm for the columns of $\mathbf{E}$. The first 800 (a) and 1600 (b) columns are facial images
and the last 257 columns are outliers.}\label{fig:outlier}
\end{figure}

\section{Conclusions}\label{sec:con}
This paper proposed a novel framework, named fixed-rank representation (FRR), for robust unsupervised visual learning.
We proved that FRR can reveal the multiple subspace structure for clustering, even with insufficient observations.
 We also demonstrated that the transposed FRR (TFRR) can successfully recover the column space, and thus can be applied for feature extraction. There remain several directions for future work:
1) provide a deeper analysis on $\mathbf{L}\mathbf{R}$ (e.g., the general strategy for choosing efficient basis from $\mathcal{R}(\mathbf{Z})$ for subspace clustering and determining dimension for feature extraction), 2) apply FRR to supervised and semi-supervised learning.

\ifCLASSOPTIONcompsoc
  \section*{Acknowledgments}
\else
  \section*{Acknowledgment}
\fi

This work is supported by the NSFC-Guangdong Joint Fund (No.U0935004), the NSFC Fund (No.61173103)
and the Fundamental Research Funds for the Central Universities.
R. Liu would also like to thank the support from CSC.

\ifCLASSOPTIONcaptionsoff
  \newpage
\fi



%
\bibliographystyle{IEEEtran}
\bibliography{frr}
\end{document}